\newcommand{\zero}{\mathbf{0}}
\newcommand{\ww}{\widehat{w^*}}
\newcommand{\av}{{a}}
\newcommand{\xv}{{x}}
\newcommand{\yv}{{y}}
\newcommand{\wv}{{w}}
\newcommand{\uv}{{u}}
\newcommand{\vv}{{v}}
\newcommand{\Uv}{{U}}
\newcommand{\Vv}{{V}}
\newcommand{\Bv}{{B}}
\newcommand{\Phiv}{{\Phi}}
\newcommand{\Sigmav}{{\Sigma}}
\newcommand{\tk}{{\tilde{k}}}
\newcommand{\tv}{\tilde{\vv}}
\newcommand{\tU}{{\widetilde{U}}}
\newcommand{\tV}{{\widetilde{V}}}
\newcommand{\ty}{\tilde{y}}
\newcommand{\tsigma}{\tilde{\sigma}}
\newcommand{\xdim}{d}
\newcommand{\nsamples}{n}
\newcommand{\s}{\sigma}
\newcommand{\Eb}{\mathbb{E}}
\newcommand{\weights}{\wv}
\newcommand{\labelvec}{\yv}
\newcommand{\uvec}{\uv}
\newcommand{\vvec}{\vv}
\newcommand{\vvecunl}{\tilde{\vv}}
\newcommand{\repmat}{\Phiv}
\newcommand{\repmatunl}{\tilde{\Phiv}}
\newcommand{\Umat}{\Uv}
\newcommand{\Smat}{\Sigmav}
\newcommand{\Vmat}{\Vv}
\newcommand{\Nc}{\mathcal{N}}
\newcommand{\zeros}{\mathbf{0}}
\newcommand{\eye}{\mathbb{I}}
\newcommand{\Rb}{\mathbb{R}}
\newcommand{\Sc}{\mathcal{S}}
\newcommand{\Tc}{\mathcal{T}}
\newcommand{\norm}[1]{\left\lVert #1 \right\rVert}
\newcommand{\rank}[1]{r(#1)}
\begin{document}
	
	\title{Label Alignment Regularization for Distribution Shift}
	
	\author{\name Ehsan Imani\thanks{Work done during employment at Huawei Noah's Ark Lab.} \email imani@ualberta.ca \\
		\addr University of Alberta, Alberta Machine Intelligence Institute
		\AND
		\name Guojun Zhang \email guojun.zhang@huawei.com \\
		\addr Huawei Noah's Ark Lab
		\AND
		\name Runjia Li \email runjia@robots.ox.ac.uk \\
		\addr Department of Engineering Science, University of Oxford
		\AND
		\name Jun Luo \email jun.luo1@huawei.com \\
		\addr Huawei Noah's Ark Lab
		\AND
		\name Pascal Poupart \email ppoupart@uwaterloo.ca \\
		\addr School of Computer Science, University of Waterloo
		\AND
		\name Philip H.S. Torr \email philip.torr@eng.ox.ac.uk \\
		\name Yangchen Pan\footnotemark[1] \email yangchen.pan@eng.ox.ac.uk \\
		\addr Department of Engineering Science, University of Oxford}
	
	\editor{Amos Storkey}

\maketitle

\begin{abstract}
	Recent work has highlighted the label alignment property (LAP) in supervised learning, where the vector of all labels in the dataset is mostly in the span of the top few singular vectors of the data matrix. Drawing inspiration from this observation, we propose a regularization method for unsupervised domain adaptation that encourages alignment between the predictions in the target domain and its top singular vectors. Unlike conventional domain adaptation approaches that focus on regularizing representations, we instead regularize the classifier to align with the unsupervised target data, guided by the LAP in both the source and target domains. Theoretical analysis demonstrates that, under certain assumptions, our solution resides within the span of the top right singular vectors of the target domain data and aligns with the optimal solution. By removing the reliance on the commonly used optimal joint risk assumption found in classic domain adaptation theory, we showcase the effectiveness of our method on addressing problems where traditional domain adaptation methods often fall short due to high joint error. Additionally, we report improved performance over domain adaptation baselines in well-known tasks such as MNIST-USPS domain adaptation and cross-lingual sentiment analysis. An implementation is available at \url{https://github.com/EhsanEI/lar/}.
\end{abstract}

\begin{keywords}
	domain adaptation, principal component analysis, regularization
\end{keywords}

\section{Introduction}
\label{sec:intro}

Unsupervised domain adaptation studies knowledge transfer from a source domain with labeled data,
to a target domain with unlabeled data, where the model will be deployed and evaluated \citep{ben2010theory,mansour2009domain}. This difference between the two domains, called domain shift, arises in many applications. A document classification or sentiment analysis model for an under-resourced language can benefit from a large corpus for a different language. A personal healthcare system is often trained on a group of users different from its target users. A real-world robot's predictions or decision-making can improve through safe and less costly interactions with a simulator \citep{pires2019multilingual,ganin2016domain,peng2018sim}.

There are diverse settings to study domain adaptation problems. In classification problems, closed set domain adaptation assumes the same categories between the two domains while open-set domain adaptation assumes that the two domains only share a subset of their categories \citep{panareda2017open}. Unsupervised, semi-supervised, and supervised domain adaptation assume that the data from the target domain is fully unlabeled, partly labeled, and fully labeled respectively \citep{ganin2016domain}. Two related problems to domain adaptation are multi-target domain adaptation where there are multiple target domains \citep{gholami2020unsupervised} and domain generalization where several source domains are sampled from a distribution over tasks and the goal is to generalize to a previously unseen domain from this distribution \citep{blanchard2011generalizing,gulrajani2020search}. Within those diverse settings, our work specifically addresses unsupervised domain shift problems.

The prevalence of domain shift in machine learning has inspired a large body of algorithmic and theoretical research on domain adaptation. \citet{ben2010theory} and \citet{zhang2019bridging} formulated the difference between the source and the target domain with the notion of $\mathcal{H}$-divergence and Margin Disparity Discrepancy and provided generalization bounds that relate performance on the two domains. \citet{acuna2021f} extended these results to a more general notion of $f$-divergence. Adversarial domain adaptation algorithms are motivated by these theoretical findings and aim to learn representations that achieve high performance in the source domain while being invariant to the shift between the source and the target domain \citep{ganin2016domain, zhang2017aspect,conneau2017word,long2015learning,pei2018multi}. 

The aforementioned representation-matching approach assumes that the optimal joint risk between the source and target is small. 
This assumption fails when the conditional distribution of the labels given input is different between source and target domains.
An example occurs when labels in the source domain are much more imbalanced than in the target domain. For instance, \citet{zhao2019learning} identified that under such label distribution shift, the optimal joint risk can be quite large and they empirically show the failure of domain adaptation methods on MNIST-USPS digit datasets. 
\citet{johansson2019support} also pointed out the limitation of matching feature representations by showing its inconsistency, and thus the tendency for high target errors. 

In this work, we adopt a novel approach to domain adaptation that focuses on label alignment, defined as the alignment of labels with the top left singular vectors of the representation. Instead of striving for an invariant representation, our proposed algorithm fine-tunes the classifier for the target domain. It achieves this by removing the influence of label alignment in the source domain and applying this alignment principle to the target domain. A critical distinction of our approach from existing methodologies is that we adjust the classifier's weight rather than its representation. Consequently, our method can be applied in settings with linear function approximation and may complement existing approaches. 

We describe the label alignment phenomenon in Section \ref{sec:background}, and outline the proposed method in Section \ref{sec:algorithm}. Section \ref{sec:theory} formally justifies our regularization method by showing that it projects the solution onto the span of the top right singular vectors of the target domain. Section \ref{sec:related} reviews related work. In Section \ref{sec:experiments}, we first provide a synthetic example where the proposed regularizer shows a clear advantage. We then experiment with imbalanced MNIST-USPS binary classification tasks and find that our method, unlike the domain-adversarial baseline, is robust to imbalance in one domain. Finally, we evaluate our algorithm on cross-lingual sentiment analysis tasks and observe improved $F_1$ score on training with our regularization, compared to adversarial domain adaptation baselines. 

\section{Background: Label Alignment}
\label{sec:background}

In this section, we briefly review the standard linear regression problem and define relevant notations to explain the \emph{label alignment property}~\citep[LAP,][]{imani2022representation}. 
\subsection{Linear Regression and Notations}\label{sec:bg-lr}
We consider a dataset with $\nsamples$ samples, (possibly learned and nonlinear) representation matrix $\repmat \in \Rb^{\nsamples\times\xdim}$ and label vector $\yv \in \Rb^n$ from a source domain. Denote the model's weights as $\weights \in \Rb^{\xdim}$, we study the linear regression problem:
\begin{align}\label{eq:linear_regression}
\min_w \|\Phiv \wv - \yv \|^2.    
\end{align}
Without loss of generality, we replace the bias unit with a constant feature in the representation matrix to avoid studying the unit separately. 
The model will be evaluated on a test set sampled from the target domain. 

The singular value decomposition (SVD) of a representation matrix $\repmat$ is $\repmat = \Umat\Smat\Vmat^\top = \sum_{i=1}^d \sigma_i \uv_i \vv_i^\top$, where 
\[
  \Smat =
  \begin{bmatrix}
    \sigma_{1} & & \\
    & \ddots & \\
    & & \sigma_{d} \\
    & {\bf 0} & 
  \end{bmatrix} \in \Rb^{\nsamples\times\xdim}
\]
is a rectangular diagonal matrix whose main diagonal consists of singular values $\s_1, \cdots, \s_\xdim$ in descending order with the remaining rows set to zero, and $$\Umat = [\uvec_1, \dots, \uvec_\nsamples ] \in \Rb^{\nsamples\times\nsamples} \mbox{ and }\Vmat = [\vvec_1, \cdots, \vvec_\xdim] \in \Rb^{\xdim\times\xdim}$$ are orthogonal matrices whose columns $\uv_i \in \Rb^n$ and $ \vvec_j \in \Rb^d$ are the corresponding left and right singular vectors. In principal component analysis \citep{pearson1901liii}, $\vv_1, \cdots, \vv_k$ are also known as the first $k$ principal components.
For a vector $\av$ and orthonormal basis $\Bv$, $\av^{\Bv}$ is a shorthand for $\Bv^\top \av$, the representation of $\av$ in terms of the row vectors of $\Bv$. We use $\rank{\cdot}$ to denote the rank of a matrix.

\subsection{Label Alignment}
\label{sec:bg-alignment}



Label alignment is specified in terms of the singular vectors of $\repmat$ and label vector $\labelvec$. The left singular vectors of $\repmat$, $\{\uv_1, \cdots, \uv_n\}$ form an orthonormal basis that spans the $\nsamples$-dimensional space. The label vector $\yv \in \Rb^n$ can be decomposed in this basis with:
\begin{align}
\yv = \Uv \yv^{\Uv} = y_1^{\Uv} \uv_1 + \cdots + y_n^{\Uv} \uv_n,
\end{align}
where $y^\Umat_i$ is the $i^{\rm th}$ component of vector $\yv^\Umat \in \mathbb{R}^n$.

Label alignment \citep{imani2022representation} is a relationship between the labels and the representation where the variation in the labels are mostly along the top principal components of the representation. For our purpose we give the following definition and verify that it approximately holds in a number of real-world tasks. A dataset has \emph{label alignment} with rank $k$ if for $k \ll \rank{\Phi}$ we have  $y_i^{\Uv} = 0, \forall i\in\{k+1,...,d\}$.

In Table \ref{tab:alignment_tasks} we investigate this property in binary classification tasks (with $\pm1$ labels) and regression tasks. In this table $k(\epsilon)$ means the smallest $k$ where 
\[
\sqrt{\sum_{i=k+1}^d (y_i^{\Uv})^2} < \epsilon \sqrt{\sum_{i=1}^d (y_i^{\Uv})^2}.
\]
If $k(\epsilon)$ is small for a small $\epsilon$ then the projection of the label vector on the span of $\Phiv$ is mostly in the span of the first few singular vectors. In all the ten tasks less than half the singular vectors with nonzero singular values already span $\ge 90\%$ of the norm of the projection of $\labelvec$ on the span of $\Phiv$. The number $k(0.1)$ is remarkably small, less than $10$, in seven out of the ten tasks. Appendix~\ref{app:labelalign} shows this property in a controlled setting where a large number of features are correlated with the labels.





\begin{table}[ht]
    \centering
\begin{tabular}{lllll}
\hline
 Task     &     $n$ &   $d$ &   $\rank{\Phi}$ &   $k(0.1)$ \\
\hline
 CT Scan   & 10000 & 385 &          372 &       12 \\
 Song Year & 10000 &  91 &           91 &        6 \\
 Bike Sharing     & 10000 &  13 &           13 &        4 \\
 MNIST    & 12665 & 785 &          580 &        2 \\
 USPS     &  2199 & 257 &          257 &        2 \\
\hline
\end{tabular}
\begin{tabular}{lllll}
\hline
 Task     &     $n$ &   $d$ &   $\rank{\Phi}$ &   $k(0.1)$ \\
\hline
 CIFAR-10  & 10000 & 513 &          513 &        7 \\
 CIFAR-100 &  1000 & 513 &          513 &        7 \\
 STL-10    &  1000 & 513 &          513 &        2 \\
 XED (En)   &  6525 & 769 &          769 &      231 \\
 AG News   & 10000 & 769 &          769 &       40 \\
\hline
\end{tabular}
\small
    \caption{Label alignment in real-world tasks. The table on the left uses the original features in the dataset and the table on the right uses features extracted from neural networks. CT Scan, Song Year, and Bike Sharing are regression tasks and the rest are binary classification. We used the first two classes of multi-class classification datasets to create a binary classification task. Other details about the datasets are in Appendix B. In all of these tasks, a large portion of the label vector is in the span of a relatively small set of top singular vectors (compared to the rank).}
    \label{tab:alignment_tasks}
    \vspace{-0.5cm}
\end{table}

Similar label alignment phenomenon has been also observed in a deep learning setting. Recent work in the Neural Tangent Kernel (NTK) literature has observed that in common datasets the label vector is largely within the span of the top eigenvectors of the NTK Gram matrix \citep{arora2019fine}. In contrast, a randomized label vector would be more or less uniformly aligned with all eigenvectors. 
More recently, \citet{baratin2021implicit} and \citet{ortiz2021can} noted that training a finite-width NN makes the alignment between the network's kernel and the task even stronger. \citet{imani2022representation} observed a similar behavior in NN hidden representations, indicating that training the NN aligns the top singular vectors of the hidden representations to the task.

\subsection{Reformulating the Regression Objective}\label{sec:bg-rewritelr}

We describe how to reformulate the linear regression objective function with the label alignment property. This reformulation shows that the linear regression objective is implicitly enforcing the LAP on the source domain (i.e., the training data) and this encourages us to further derive our domain adaptation regularization on the target domain.

Objective \eqref{eq:linear_regression} can be rewritten by the following steps. 

\begin{align}\label{eq:lr-rewrite}
\min_{\wv} \|\Phiv \wv - \yv\|^2 &= \min_{\wv} \| \Uv\Sigmav \Vv^\top \wv - \yv\|^2 \nonumber \\
&= \min_{\wv} \|\Sigmav \Vv^\top \wv - \Uv^\top \yv\|^2 \nonumber \\
&= \min_{\wv} \|\Sigmav \wv^{\Vv} - \yv^{\Uv}\|^2 \nonumber, \text{shorthand notation} \\
&= \min_\wv \sum_{i=1}^d (\sigma_i w_i^{\Vv} - y_i^{\Uv})^2 + \sum_{i={d+1}}^n (y_i^{\Uv})^2.
\end{align}

In the first line, since $\Uv$ is an orthogonal matrix, we have $\Uv\Uv^\top = \eye$ and $\|\Uv \xv\| = \|\xv\|$ for any vector $x$. Note that the last term $\sum_{i={d+1}}^n (y_i^{\Uv})^2$ can be dropped as it is a constant and does not affect the optimization. 

Assume the LAP holds for the first $k<d$ singular vectors. Then $y_{i}^{\Uv} = 0, \forall i \in {k+1,...,d}$. Hence the first term in~\eqref{eq:lr-rewrite} can be further decomposed to
\begin{align*}
\sum_{i=1}^d (\sigma_i w_i^{\Vv} - y_i^{\Uv})^2 = 
\sum_{i=1}^k (\sigma_i w_i^{\Vv} - y_i^{\Uv})^2 + \sum_{i={k+1}}^d \sigma_i^2 (w_i^{\Vv})^2.
\end{align*}
Plugging this decomposition into the above objective~\eqref{eq:lr-rewrite}  and dropping the last term, we get
\begin{align}\label{eq:lr-rewrite-final}
\min_{w} \sum_{i=1}^k (\sigma_i w_i^{\Vv} - y_i^{\Uv})^2 + \sum_{i={k+1}}^d \sigma_i^2 (w_i^{\Vv})^2.
\end{align}
 We can interpret the first term in the rewritten objective~\eqref{eq:lr-rewrite-final} as linear regression on a smaller subspace and the second term as a regularization term implicitly enforcing label alignment property on the training data $(\Phiv, \yv)$. 

The latter is because minimizing the second term has the effect of regularizing the predictions so they likely align with the top singular vectors. This is because: 
\[
\labelvec = \repmat \weights = \Umat \Smat \Vmat^\top \weights
\]
and therefore $\Uv^\top \labelvec = \Smat \Vmat^\top \weights$, which can be written as 
\[
\yv^{\Uv} = \Smat \weights^\Vmat
\]
by using the shorthand notations. For the $i$th component in vector $\yv^{\Uv}$, we have ${\uvec_i}^\top \labelvec = \s_i {\vvec_i}^\top\weights$. Minimizing $w^{\Vmat}_i$ for $i\in\{k+1,\cdots,d\}$ will reduce the corresponding $y^{\Umat}_i$ and leave $y^{\Umat}_i$ for those components $i<k+1$. We call the second term $\sum_{i={k+1}}^d \sigma_i^2 (w_i^{\Vv})^2$ from~\eqref{eq:lr-rewrite-final} \emph{label alignment regularization}. 

The derivation above shows that when minimizing the original mean squared error for linear regression, we implicitly use label alignment regularization on the training data (source domain data). In the next section, we introduce this regularization into the target domain. 


\section{Label Alignment for Domain Adaptation}
\label{sec:algorithm}

This section describes our approach to domain adaptation by enforcing the LAP. 

In unsupervised domain adaptation, we have a labeled dataset $(\Phiv, \yv)$ and an unlabeled dataset $\tilde{\Phiv}$ with the corresponding label vector $\tilde{\yv}$ unknown. From~\eqref{eq:lr-rewrite-final}, we know that enforcing the LAP does not require knowing the labels $\tilde{\yv}$. This inspires our key idea of improving the generalization on the target domain: we can use the unlabeled part to enforce the LAP. 

Using tilde notation for the SVD of $\tilde{\Phiv}$ and assuming $(\tilde{\Phiv},\tilde{\yv})$ satisfies the LAP with rank $\tilde{k}$, we can put together the supervised part of the source domain and unsupervised part of the target domain to form the objective:
\begin{align}\label{eq:semi_supervised_loss}
\min_\wv \|\Phiv \wv - \yv \|^2 \!+ \! \sum_{i=\tilde{k}+1}^d \tilde{\sigma}_i^2 (w_i^{\tilde{\Vv}})^2.
\end{align}
The second term $\sum_{i=\tilde{k}+1}^d \tilde{\sigma}_i^2 (w_i^{\tilde{\Vv}})^2$ is the \emph{label alignment regularization on the target domain}. As we explained in the previous section, the first term (i.e. the standard regression part) in the above objective implicitly enforces the LAP (with rank $k$) on the source domain. If we expand \eqref{eq:semi_supervised_loss} by the reformulated linear regression objective~\eqref{eq:lr-rewrite-final}, we have:
\begin{align*}
\min_{\wv} \sum_{i=1}^k (\sigma_i w_i^{\Vv} - y_i^{\Uv})^2 \!+\! \sum_{i={k+1}}^d \sigma_i^2 (w_i^{\Vv})^2 \!+ \!\sum_{i=\tilde{k}+1}^d \tilde{\sigma}_i^2 (w_i^{\tilde{\Vv}})^2.
\end{align*}
Therefore, we have actually done the regularization \emph{twice}: one with the source domain and one with the target domain. We explicitly remove the label alignment regularization on the source domain and arrive at the final objective function:
\begin{small}
\begin{align}\label{eq:regression_reduction}
\min_{\wv} \|\Phiv \wv - \yv \|^2 - \sum_{i={k+1}}^d \sigma_i^2 (w_i^{\Vv})^2 + \lambda \sum_{i=\tilde{k}+1}^d \tilde{\sigma}_i^2 (w_i^{\tilde{\Vv}})^2.
\end{align}
\end{small}
Algorithm \ref{alg1} shows the pseudo-code. The objective to be minimized has three terms and the hyperparameter $\lambda$ controls the relative importance of the regularizer. As we will show in \S~\ref{sec:theory}, under certain constraints this hyperparameter does not affect the final solution and only changes the convergence rate. The first term is the loss that uses the labeled data from the source domain. Following the recent evidence on the viability of the squared error loss for classification \citep{hui2020evaluation}, we use the squared error in both regression tasks and binary classification tasks. We use $\pm1$ labels in binary classification as these labels showed the label alignment property (LAP) in Table \ref{tab:alignment_tasks}. The second term removes implicit regularization from the source domain. The third term is the proposed regularizer that uses the unlabeled data from the target domain. The second and third terms serve as a projection onto the orthogonal complement of ${\rm span}(\tv_{k+1}, \dots, \tv_{\xdim})$, or namely, ${\rm span}(\tv_1, \dots, \tv_k)$, which we show in the next section. 

\begin{algorithm}
\caption{Label Alignment Regression}\label{alg1}
\begin{algorithmic} 
\STATE Get data $\repmat$, $\labelvec$, $\repmatunl$, and hyperparameters $t$, $\alpha$, $k$, $\tilde{k}$, $\lambda$
\STATE Compute covariance matrices $\repmat^\top\repmat$ and $\repmatunl^\top\repmatunl$
\STATE Perform eigendecomposition of $\repmat^\top\repmat$ and $\repmatunl^\top\repmatunl$ to get $\sigma_{k+1:\xdim}$, $\tilde{\sigma}_{\tilde{k}+1:\xdim}$, $\vvecunl_{k+1:\xdim}$ and $\vvecunl_{\tilde{k}+1:\xdim}$
\STATE Initialize $\weights$ to zero
\FOR {$t$ iterations}
	\STATE Perform gradient step with respect to $\|\Phiv \wv - \yv \|^2 - \sum_{i={k+1}}^d \sigma_i^2 (w_i^{\Vv})^2 + \lambda \sum_{i=\tilde{k}+1}^d \tilde{\sigma}_i^2 (w_i^{\tilde{\Vv}})^2$ with step-size $\alpha$ and update $\weights$
\ENDFOR
\end{algorithmic}
\end{algorithm}



\newcommand{\Mc}{\mathcal{M}}
\section{Label Alignment Regularization as Projection}
\label{sec:theory}

In this section, we provide theoretical insight into how close the solution acquired by our regularization approach is to the optimal solution on the target domain. First, we use a simple rotated Gaussian example to illustrate that our label alignment can exactly give the optimal target solution (see also \S~\ref{sec:experiments}). Second, we generalize our conclusion beyond the Gaussian example and present the main theorem, showing that when $k=\tilde{k}$ and under a weak additional assumption our solution 1) lies in the span of the top few singular vectors of the target domain and 2) lies in the same direction as the optimal target domain solution under certain assumptions. All proofs in this section are in Appendix~\ref{app:proofs-solutions}. 

For convenience, we rewrite our objective \eqref{eq:regression_reduction} as:
\begin{align*}
\min_w \|\Phi w - y \|^2 - w^\top (S - S_k) w + \lambda w^\top (\tilde{S} - \tilde{S}_{\tilde{k}}) w,
\end{align*}
where $S = {\Phi}^\top {\Phi}$ is the covariance matrix of ${\Phi}$, $S_{{k}}$ is the covariance matrix truncated to rank ${k}$ and similar notations hold for $\tilde{S}$ and $\tilde{S}_{\tk}$. Then the optimal solution for this problem is:
\begin{align}\label{eq:learned_solution}
\widehat{w^*} = (S_k + \lambda(\tilde{S} - \tilde{S}_{\tilde{k}}))^{-1}\Phi^\top y,
\end{align}
if the matrix $S_k + \lambda(\tilde{S} - \tilde{S}_{\tilde{k}})$ is full rank, which requires $k \geq \tk$. In practice, we can treat $k$ and $\tk$ as hyper-parameters and choose them as we wish.

\subsection{Rotated Gaussian Example}\label{sec:RGaussian}

Consider a simple example where the source and target domain data are both two-dimensional Gaussians, but the target domain is acquired by rotating the source domain (Figure 1 provides a concrete example). Denote the following Gaussian distribution as:
\begin{align}
\Nc(0, Q) = \frac{1}{2\pi \sqrt{|Q|}} \exp\left( -\frac12 x^\top Q^{-1} x\right),
\end{align}
where
$
Q = P \begin{bmatrix} s_1^2 & 0 \\
0 & s_2^2 \end{bmatrix} P^\top, \,\mbox{ and } P = \begin{bmatrix}
p_1 & p_2
\end{bmatrix}$.  
Here we consider the spectral decomposition of the covariance matrix $Q \in \Rb^{2\times 2}$ with $s_1 > 0$, $s_2 > 0$.
Here $P \in \Rb^{2\times 2}$ is an orthogonal matrix, and $p_1$, $p_2$ are its column vectors. Since $x = P P^\top x = x_1^P p_1 + x_2^P p_2$, we can rewrite the distribution as: 
\begin{align}
\Nc(0, Q) = \frac{1}{2\pi s_1 s_2}\exp\left( -\frac{1}{2 s_1^2}(x_1^P)^2 -\frac{1}{2 s_2^2}(x_2^P)^2 \right). \nonumber
\end{align}
We further define the conditional distributions as follows:
\begin{align}\label{eq:def-pxgiveny}
p_{\Sc}(x | y) &= 2 \Nc(0, Q) \mathds{1}(y x_1^P > 0),
\end{align}
where $y\in \{1, -1\}$. Similarly, we can define the target distribution by replacing $Q, P, s_i, p_i$ with $\tilde{Q}, \tilde{P}, \tilde{s}_i, \tilde{p}_i$. We now compute different solutions and then compare them. We assume that there is distribution shift and that $p_1$ is not parallel to $\tilde{p}_2$. 

Recall the regression solution on the source domain:
\begin{align}\label{eq:source_sol}
w_\Sc^* = (\Phi^\top \Phi)^{-1} \Phi^\top y = S^{-1} \Phi^\top y.
\end{align}
Assuming that the sample size is large enough. 
\begin{align}
\frac{1}{n}\Phi^\top y &\approx \Eb_{x, y} [x y] = \sqrt{\frac{2}{\pi}} s_1 p_1,
\end{align}
where the computation of the expectation is detailed in Lemma~\ref{lem:expect-xy}, Appendix \ref{app:proofs-solutions}. Combining this equation with
\begin{small}
\begin{align}\label{eq:Phi_y}
\Phi^\top y = V \Sigma^\top y^U = \sigma_1 y_1^U v_1 + \sigma_2 y_2^U v_2,
\end{align}
\end{small}
we know that $y_2^U = 0$ if we identify $v_1 = p_1$. In other words, the label alignment property holds on the source domain with rank $k = 1$. The covariance matrix is:
\begin{align}\label{eq:Phi_Phi}
\frac1n \Phi^\top \Phi &\approx \Eb_x [xx^\top] = s_1^2 p_1 p_1^\top + s_2^2 p_2 p_2^\top,
\end{align}
We can identify $v_i = p_i$, $s_i^2 = \sigma_i^2/n$ from the SVD of $\Phi$.
Plugging \eqref{eq:Phi_y} and \eqref{eq:Phi_Phi} back into \eqref{eq:source_sol} we get the optimal solution on the source domain:
\begin{small}
\begin{align}
w_\Sc^* = \sqrt{\frac{2}{\pi}} \frac{1}{s_1} v_1,
\end{align}
\end{small}
which agrees with our intuition that $w_\Sc^*$ should be in the direction with the largest singular value. Similarly, the optimal solution on the target domain is
\newcommand{\ts}{\tilde{s}}
\begin{small}
\begin{align}
w_\Tc^* = \sqrt{\frac{2}{\pi}} \frac{1}{\ts_1} \tilde{v}_1,
\end{align}
\end{small}
where the tilde notations are the same type of variables used on the target domain. 
According to \eqref{eq:learned_solution}, the label alignment solution with the removal of implicit regularization (given that $\tv_2$ is not parallel to $v_1$) is:
\begin{align}\label{eq:label_alignment}
\widehat{w^*} &= (S_k + \lambda(\tilde{S} - \tilde{S}_{\tilde{k}}))^{-1}\Phi^\top y \\
&= (s_1^2 v_1 v_1^\top + \lambda\tilde{s}_2^2 \tilde{v}_2 \tilde{v}_2^\top)^{-1} \sqrt{\frac{2}{\pi}} {s_1} v_1,
\end{align} 
To better understand the solution $\ww$, suppose $\lambda = 1$. Then if we replace $\tilde{s}_2$, $\tv_2$ by $s_2$, $v_2$, we obtain $w_\Sc^*$. If we replace $s_1$, $v_1$ by $\ts_1$, $\tv_1$, we obtain $w_\Tc^*$.

In fact in this example the effect of label alignment regularization is some kind of projection into the space of $\tilde{v}_1$. Regardless of the hyperparameter $\lambda$, we always have the following result:
\begin{restatable}[]{proposition}{rGauss}\label{prop-gaussian}
In the example in this section suppose $v_1^\top \tilde{v}_1 \neq 0$. Then the label alignment solution is $\widehat{w^*} = c w_\Tc^* / v_1^\top \tilde{v}_1$ with $c > 0$.
\end{restatable}
The proposition shows that given $v_1^\top \tilde{v}_1 > 0$, our solution $\widehat{w^*}$ is exactly in the same direction as the optimal solution $w_\Tc^*$, which is verified in our experiments (\Cref{fig:rotated_Gaussian}). The above discussion also holds in a more generalized setting, as we show below.


\subsection{Generalized Setting}

This section derives the relation between the solutions $\widehat{w^*}$ and $w_\Tc^*$ in a more general setting, where $x$ is high dimensional, and $k, \tilde{k}$ can be larger than one. We can rewrite
\begin{align}
w_\Sc^* = \sum_{i\leq k} \sigma_i^{-1}y_i^U v_i,\, w_\Tc^* = \sum_{i\leq \tilde{k}} \tsigma_i^{-1}\ty_i^\tU \tv_i.
\end{align}
Hence, $w_\Sc^* \in {\rm span}(v_1, \dots, v_k)$, $w_\Tc^* \in {\rm span}(\tv_1, \dots, \tv_k)$. We show that our solution is also in the span of the top right singular vectors of the target domain as $w_\Tc^*$: 
\begin{restatable}[]{theorem}{alignment}\label{thm:alignment}
Assume $k = \tk$ and $(V'_{d-k})^\top \tilde{V}_{d-k}'$ is invertible with
$
V'_{d-k} = \begin{bmatrix}v_{k + 1} & \dots& v_d \end{bmatrix}$ and $ \tilde{V}_{d-k}' = \begin{bmatrix}\tv_{\tk + 1} & \dots& \tv_d \end{bmatrix}, 
$
then $\widehat{w^*} \in {\rm span}(\tv_1, \dots, \tv_k)$ holds and $\ww$ is independent of $\lambda$.
\end{restatable}
This theorem tells us that after label alignment regularization, $\widehat{w^*}$ and $w_\Tc^*$ lie in the same subspace. 

We now characterize when our solution can lie in exactly the same direction as the optimal target domain's solution. Denote $V_k = [v_1 \dots v_k]$,\, $\tV_{\tk} = [\tv_1  \dots \tv_{\tk}]$, and 
\begin{align}
\mu_k = (y_1^U/\sigma_1, \dots, y_k^U/\sigma_k), \, \tilde{\mu}_{\tk} = (\ty_1^\tU/\tsigma_1, \dots, \ty_\tk^\tU/\tsigma_\tk). \nonumber
\end{align}
We have the following theorem:
\begin{restatable}[]{theorem}{optimal}\label{thm:proportional}
Given invertible $V_k^\top \tV_k$, with $V_k = [v_1 \dots v_k]$,\, $\tV_{\tk} = [\tv_1  \dots \tv_{\tk}]$ and with the same assumptions of \Cref{thm:alignment}, there exists $c> 0$ such that $\ww = c w_\Tc^*$ iff:
\begin{small}
\begin{align}\label{eq:condition_mu}
\mu_k = c V_k^\top \tV_k\tilde{\mu}_k = c V_k^\top w_\Tc^*.
\end{align}
\end{small}
\end{restatable}

In the special case of $k = \tk = 1$, we obtain the following: 
\begin{restatable}[]{corollary}{rankone}\label{cor-highdim-proportional}
Given $k = \tk = 1$ and $\tilde{y}^{\tU}_1 y_1^U v_1^\top \tilde{v}_1 > 0$, we have $\ww = c w_\Tc^*$.
\end{restatable}
This corollary tells us that if $k = \tk = 1$ and if, for both domains, the labels can be determined by the principal component (or, in other words, the most significant feature), then our label alignment regularization finds the optimal target solution. 

Back in the more general setting of $k = \tk \geq 1$, we show a sufficient condition for the invertibility assumption in Theorem \ref{thm:alignment} to hold: $V$ and $\tV$ are somehow similar to each other. 
\begin{restatable}[]{proposition}{invertible}\label{prop-inv-assump}
Suppose $\epsilon < \min\{\frac1k, \tfrac{1}{d-k}\}$, $|v_i^\top \tv_j| \leq \epsilon$ for any $i\neq j$, and $v_i^\top \tv_i \geq 1 - \epsilon$ for any $i$, then both $V_k^\top \tV_k$ and $(V'_{d-k})^\top \tilde{V}_{d-k}'$ are invertible.
\end{restatable}

We can give a stronger guarantee for the assumption that $V_k^\top \tilde{V}_k$ is invertible. Note that 
$\mathds{S}^{d-1}$ denotes the $(d-1)$-dimensional unit hypersphere in $\Rb^d$. 
\begin{restatable}[]{proposition}{invertibleStrong}\label{prop:inv-assump-strong}
Suppose the target singular vectors $\tv_1, \dots, \tv_d$ satisfies the following probability distribution:
\begin{align}
p(\tv_1, \dots, \tv_d) = p(\tv_1) p(\tv_2 | \tv_1) \dots p(\tv_d | \tv_1, \dots, \tv_{d-1}),  
\end{align}
where $p(\tv_1)$ is a continuous distribution on $\mathds{S}^{d-1}$ and each $p(\tv_i | \tv_1, \dots, \tv_{i-1})$ is a continuous distribution on the manifold $\mathds{S}^{d-1}$ for $2\leq i \leq d$. Then $V_k^\top \tV_k$ is invertible almost surely. 
\end{restatable}

Note that our result does not depend on any assumption about the optimal joint error, as is commonly required in the domain adaptation literature \citep[e.g.][]{ben2010theory, acuna2021f}. Moreover, as pointed out by \citet{zhao2019learning}, the usual generalization bound would fail in the presence of heavy shift of label distributions, under which our method is still robust (see \Cref{sec:experiments}).


\section{Related Work}
\label{sec:related}

The result by \citet{ben2010theory} provides a general theoretical guidance regarding how to learn the domain-invariant representations. The basic idea is to make the joint error of the best hypothesis on the two domains on the invariant representation small. Low joint error in the domain-adversarial model is crucial to the model's performance on the target domain.

The dominant approach to domain adaptation is learning domain-invariant representations that are similar in some sense between source and target domains \citep{tzeng2014deep,zhuang2015supervised,ghifary2016deep,long2016unsupervised,long2017deep,benaim2017one,bousmalis2017unsupervised,courty2017joint,motiian2017few,rebuffi2017learning,saito2017asymmetric,zhang2019bridging}. Different methods differ in how the invariance property is enforced, which typically includes how the similarity is defined and implemented. Recent work in deep learning encourages this invariance in one or multiple hidden representations of a neural network.

The popular domain-adversarial methods achieve domain-invariant representations based on the idea of adversarial models~\citep{long2015learning,zhuang2017supervised,lee2019sliced,damodaran2018deepjdot,acuna2021f}. Specifically, \citet{long2015learning} adversarially learn representations to distinguish the data points from the source and target domain while minimizing the supervised loss. \citet{conneau2017word} use a domain-adversarial approach to align representations of the source and target domains in a shared space. They transform the source embeddings with a linear mapping that is encouraged to be orthogonal. The domain-adversarial model then generates pseudo-labels on the target domain for additional refinement. The shared representation, which is learned without a parallel corpus, outperforms previous supervised methods in several cross-lingual tasks.

There are various similarity or distance measures to define a loss function for enforcing invariant representations. For example, \citet{zhuang2017supervised} and \citet{meng2018adversarial} minimize the KL-divergence and \citet{lee2019sliced} and \citet{damodaran2018deepjdot} minimize the Wasserstein distance. \citet{sun2016deep} minimize the $\ell_2$ distance between the covariance matrices of the source and target domain representations. \citet{long2015learning} minimize Maximum Mean Discrepancy between source and target domain hidden representations embedded with a kernel.

Despite the flourishing literature on representation-based domain adaptation methods, they have critical limitations. \citet{zhao2019learning} and \citet{johansson2019support} have presented synthetic examples in which a domain-adversarial model that minimizes the supervised loss in the source domain, while aiming for an invariant representation, still fails in the target domain. We will demonstrate this failure through our experiments in the next section and show that our proposed algorithm remains robust in such situations.


Domain-adaptation methods that do not rely on representation learning are less studied and can be applied in highly restricted settings. For example, importance sampling \citep{shimodaira2000improving} assumes label conditional distribution must be the same and target domain is within the support of the source domain. Our work supplements this direction. 




\section{Experiments}
\label{sec:experiments}

In this section, we first design a synthetic dataset to verify that our regularizer is indeed beneficial in a distribution shift setting by adjusting the classifier and to perform an ablation study on the role of removing implicit regularization. Then, we demonstrate the effectiveness of our method on a well-known benchmark where classic domain-adversarial methods are known to fail \citep{zhao2019learning}. Last, we show our algorithm's practical utility in a cross-lingual sentiment classification task. 

\subsection{Synthetic Data}

We create a distribution shift scenario where the alignment property is present in the labeled data distribution (Figure \ref{fig:synthetic_rotation}), as theoretically discussed in \S~\ref{sec:RGaussian}. For the source domain (\subref{fig:synthetic_source}), the input is sampled from a two-dimensional Gaussian distribution. The distribution is more spread out in the direction of the first principal component (see the black arrows) which corresponds to a larger singular value. In this task, the two classes are separated along this direction as shown in the figure. The resulting vector of all labels is mostly in the direction of the first singular vector of the representation matrix. We rotate the input by $45^{\circ}$ to create the target distribution in (\subref{fig:synthetic_hyperplanes}). 

We then run the proposed algorithm with hyperparameters $k=\tilde{k}=1$ and different values of $\lambda$ and compared it with the $\ell_2$ regularizer and a domain-adversarial baseline DANN \citep{ganin2016domain} with one hidden layer of width 64. Note that the optimal solution should be independent of $\lambda$ (Prop.~\ref{prop-gaussian}), but $\lambda$ may affect the convergence rate. Figures \ref{fig:synthetic_rotation} (\subref{fig:synthetic_hyperplanes}) to (\subref{fig:synthetic_acc}) show the results. Further details are in Appendix B. In Figure \ref{fig:synthetic_rotation} (\subref{fig:synthetic_hyperplanes}) we see that the solution without regularization separates the classes as they are separated in the source domain. The proposed algorithm finds a separating hyperplane that matches how the classes are separated in the target domain.
Finally, (\subref{fig:synthetic_acc}) shows that our regularizer surpasses both the $\ell_2$ regularizer and the domain-adversarial baseline and achieves a near perfect classification in the target domain in this example. The dark green line in this figure uses 2k epochs and its accuracy is sensitive to $\lambda$. Increasing the number of epochs to 20k (green line) reduces this sensitivity, indicating that, as the theory predicted, the final solution is robust to $\lambda$ and the sensitivity is due to slow optimization.

\begin{figure*}[ht!]
\centering
\label{fig:rotated_Gaussian}
\begin{subfigure}{.3\textwidth}
  \centering
  \includegraphics[width=0.9\linewidth]{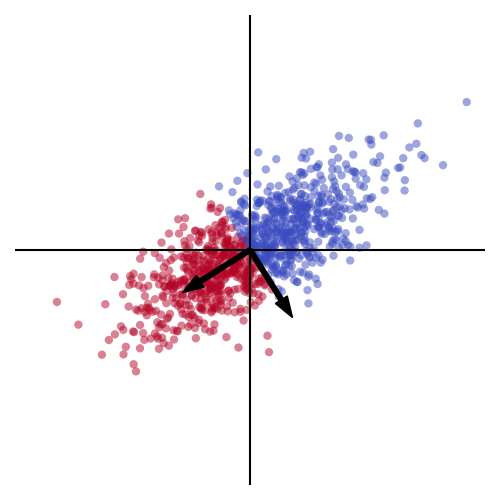}
 \caption{Source Domain}
  \label{fig:synthetic_source}
\end{subfigure}
\begin{subfigure}{.3\textwidth}
  \centering
  \includegraphics[width=0.9\linewidth]{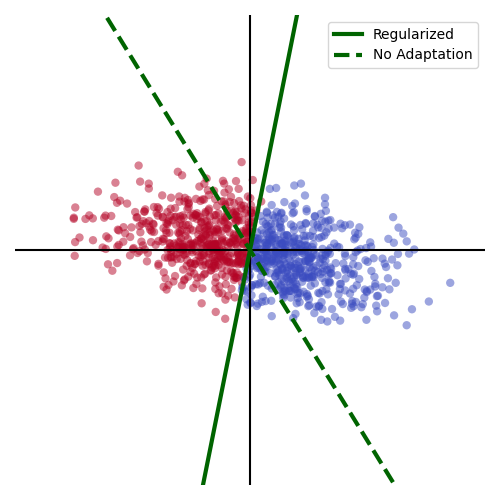}
 \caption{Target Domain}
  \label{fig:synthetic_hyperplanes}
\end{subfigure}
\begin{subfigure}{.35\textwidth}
  \centering
  \vspace{.45cm}
  \includegraphics[width=\linewidth]{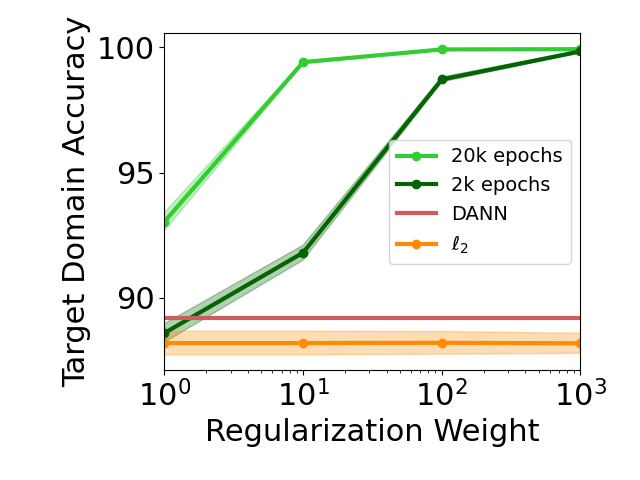}
  \vspace{-.6cm}
 \caption{Performance}
  \label{fig:synthetic_acc}
\end{subfigure}
\small
\caption{(a) Source domain. The black arrows show principal components. (b) Target domain. The green lines show separating hyperplanes found without using any regularization (dashed) and with our regularizer with $\lambda=10^3$ (solid). (c) Performance on the target domain. The red line shows the performance of DANN. The x axis is the regularization coefficient for $\ell_2$ regularization (orange curve) and $\lambda$ for the proposed regularizer (green curves). The proposed regularizer achieves near-perfect accuracy on this domain. Shaded areas are standard errors over 10 runs. Variations in target accuracy of DANN are near zero.}
\label{fig:synthetic_rotation}
\end{figure*}

We also want to evaluate the effectiveness of removing the implicit regularization term $\sum_{i={k+1}}^d \sigma_i^2 (w_i^{\Vv})^2$ as described in Equation \ref{eq:regression_reduction}. More specifically, we are interested in when removing the implicit regularization would be effective.

Recall from Section \ref{sec:theory} that the vanilla closed form- solution without any regularization is:
\begin{equation}
    w = S^{-1}\Phi^{T}y
\end{equation}
where $S = \Phi^{T}\Phi$, and $\Phi \in \Rb^{n\times d}$ is the feature matrix.

To utilize more specific chracteristics of a dataset, we want to first explore the synthetic data case as described in Section \ref{sec:experiments}. Then the closed-form solution with label alignment regularization with removal of the implicit regularization term is:
\begin{equation}
\begin{aligned}
w &= (S_k + \lambda(\Tilde{S} - \Tilde{S}_k))^{-1}\Phi^Ty\\
&= (s_1^2p_1p_1^T + \lambda \Tilde{s}_2^2\Tilde{p}_2\Tilde{p}_2^T)^{-1} \sqrt{\frac{2}{\pi}}s_1p_1
\end{aligned}
\end{equation}
and the closed-form solution with label alignment regularization without implicit regularization removal is:
\begin{equation}
\begin{aligned}
w &= (S + \lambda(\Tilde{S} - \Tilde{S}_k))^{-1}\Phi^Ty\\
&= (s_1^2p_1p_1^T + s_2^2p_2p_2^T + \lambda \Tilde{s}_2^2\Tilde{p}_2\Tilde{p}_2^T)^{-1} \sqrt{\frac{2}{\pi}}s_1p_1.
\end{aligned}
\end{equation}

The only difference is $s_2^2p_2p_2^T$, and therefore the relative magnitude of $s_2$ and $\lambda$ should be the deciding factor of the solution $w$. Experiments conducted on synthetic data corroborated the theoretical conclusion, as illustrated in Figure \ref{fig:synthetic_implicit_removal_ablation}. As seen in the figure, $s_2$ is larger compared to the previous experiment. The target distribution is a rotation of this new distribution. Removing implicit regularization results in a different performance especially with smaller values of $\lambda$.

\begin{figure}[ht]
\begin{subfigure}{.30\textwidth}
  \centering
  \includegraphics[width=\linewidth]{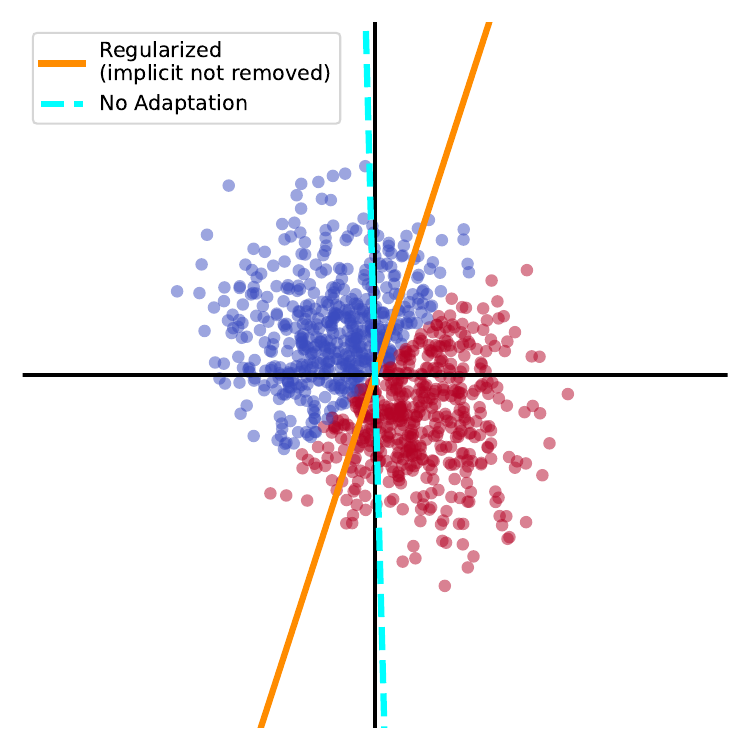}
 \caption{Without Implicit Removal}
  \label{fig:synthetic_implicit_removed}
\end{subfigure}
\begin{subfigure}{.30\textwidth}
  \centering
  \includegraphics[width=\linewidth]{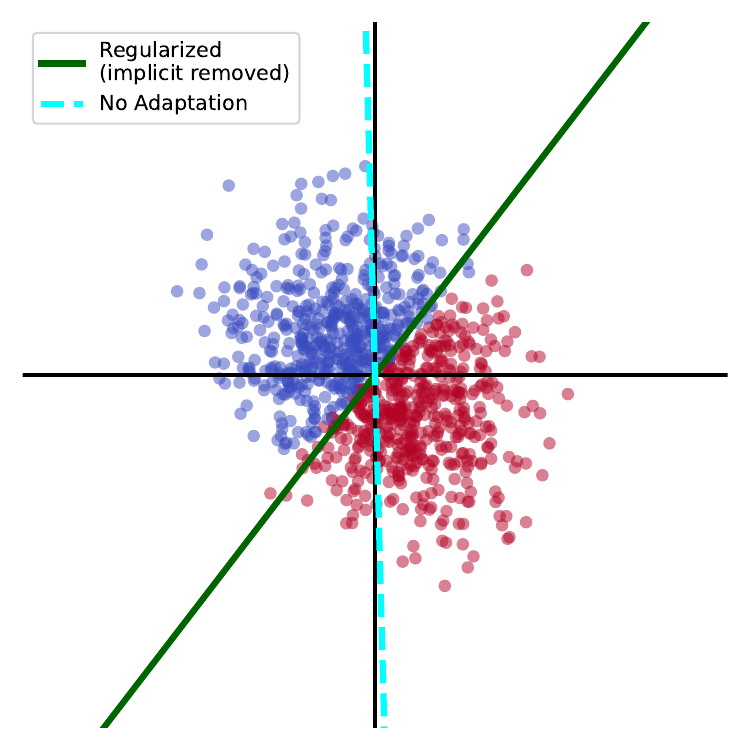}
 \caption{With Implicit Removal}
  \label{fig:synthetic_no_implicit_removed}
\end{subfigure}
\begin{subfigure}{.35\textwidth}
  \centering
  \vspace{.95cm}
  \includegraphics[width=\linewidth]{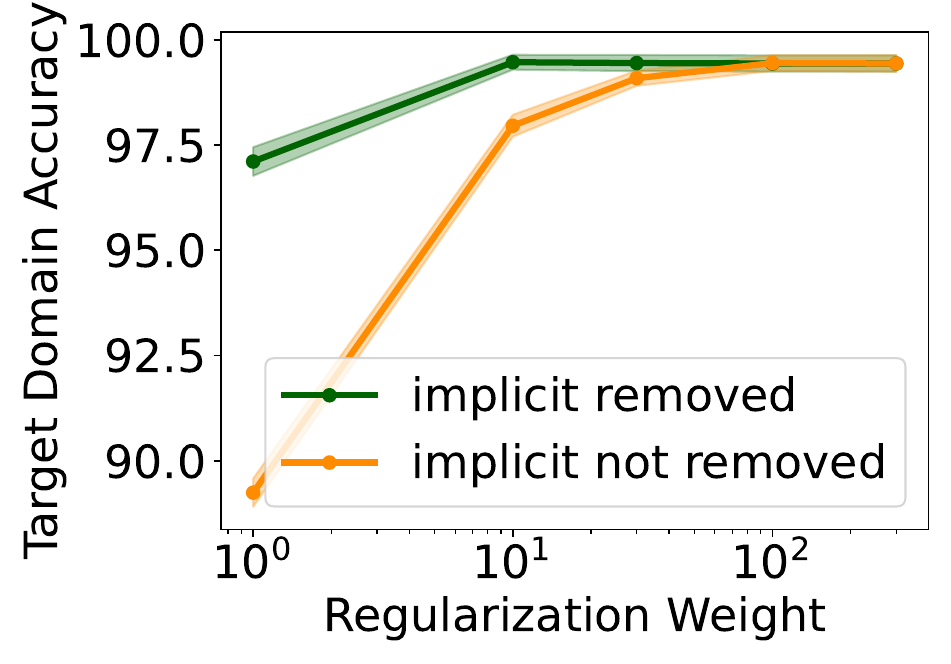}
 \caption{Performance}
  \label{fig:synthetic_performance_implicit_removed}
\end{subfigure}
\caption{(a) Without Implicit Removal. The cyan dashed line is the decision boundary without any adaptation. The orange line shows the decision boundary when $\lambda$ is set to 1 for our proposed regularizer without implicit removal. (b) With Implicit Removal. The green line shows the decision boundary when $\lambda$ is set to 1 with implicit removal. (d) Performance on the target domain. The horizontal axis is $\lambda$ for the proposed regularizer. Before $\lambda$ dominates, the benefits of removing implicit regularization are significant. Shaded areas are standard errors over 10 runs.}
\label{fig:synthetic_implicit_removal_ablation}
\end{figure}

\subsection{MNIST-USPS} 

The experiments in Table \ref{table:m-u} consider binary classification tasks from the MNIST-USPS domain adaptation benchmark with linear and shallow models. Both MNIST and USPS are digit classification datasets with 10 classes and therefore 45 binary classification tasks between two digits. In MNIST, the input is a 28$\times$28 grayscale image flattened to a 784-dimensional vector. USPS images are 16$\times$16 and we resize them to 28$\times$28 and flatten each input to a 784 vector.

Each column of the table is the average accuracy over 45 domain adaptation tasks. In the first column, the source domain (fully labeled) is a pair of digits from MNIST and the target domain (fully unlabeled) is the same pair, but from USPS. The datasets for the source and target domains are reversed in the second column. The last three columns are like the second column except that, in binary classification between two digits, only a certain ratio of the lower digit in the source domain, as indicated in the header, is used. This subsampling creates a large degree of imbalance that, as \citet{zhao2019learning} observed, poses a challenge to domain-adversarial methods.

We use the train split of the dataset for the source domain and the test split of the other dataset for the target domain. A small set of 100 labeled points from the target domain is used for hyperparameter selection as we have not developed a fully unsupervised hyperparameter selection strategy. However, we give the baseline the same validation set to keep the experiment fair. 

The first two rows show the performance of the domain-adversarial method DANN \citep{ganin2016domain} with one hidden layer on these tasks. (Deeper NNs performed worse on the highly imbalanced tasks in our preliminary experiments.) The first row is the average target domain accuracy of a two-layer ReLU NN trained purely on the source domain. In the second row, the domain-adversarial objective is added to reduce domain shift in the hidden representation. DANN improves accuracy in both U$\to$M and M$\to$U. In the cases with subsampling, however, DANN consistently hurts performance. The third and fourth row show the performance of a linear method with or without our regularizer. Using our regularizer improves performance in all columns and outperforms the models in the other rows in the cases with subsampling.

\begin{table*}[ht!]
\centering

\begin{tabular}{llllll}
\hline
            &   U $\rightarrow$ M &   M $\rightarrow$ U &   0.3 $\rightarrow$ U &   0.2 $\rightarrow$ U &   0.1 $\rightarrow$ U \\
\hline
 No Adaptation (NN) &           77.85 &           84.88 &                \textbf{83.36} &                \textbf{72.84} &                \textbf{53.58} \\
 DANN          &           \textbf{83.93} &           \textbf{86.69} &                78.05 &                64.2  &                47.27 \\
\hline
 No Adaptation (Linear) &           78.68 &           83.84 &                80.99 &                79.47 &                75.41 \\
 Label Alignment Regression          &           \textbf{81.97} &           \textbf{88.96} &                \textbf{86.99} &                \textbf{84.84} &                \textbf{82.71} \\
\hline
\end{tabular}
\small 
\caption{Accuracies on MNIST-USPS benchmark. Each column is averaged over the 45 binary classification tasks. M and U indicate MNIST and USPS. Ratios indicate MNIST tasks where one digit is subsampled. In tasks with severe subsampling the proposed algorithm improves the accuracy and achieves the highest performance. DANN performs worse than a regular neural network under subsampling.}
\label{table:m-u}
\end{table*}

We then investigate why DANN hurts performance under subsampling. A domain-adversarial network like DANN has three components: a domain classifier (discriminator) that predicts whether a data point is from the source or the target domain, a generator that learns a shared embedding between the two domains, and a label predictor that performs classification on the task of interest using the generator's embedding. The label predictor uses the labeled source data to increase source accuracy, i.e. the label predictor's accuracy on the source domain. The ultimate goal is to have the label predictor achieve high accuracy on the target domain. The discriminator's accuracy on the other hand shows how successful the discriminator is in recognizing whether a point is from the source or the target domain. In an ideal case this accuracy should be close to that of a random classifier since the data points from the two domains are mapped close to each other in the shared embedding.

Table \ref{table:m-u-dann} shows the average source domain accuracy and domain classifier accuracy of DANN. Average source accuracy remains $\ge95\%$ and average domain classifier remains $\approx 50\%$, indicating that DANN has managed to learn a representation that is suitable for the source domain and maps the points from the source and target domain close to each other. The large drop in DANN's performance can be attributed to the fact that the representation maps positive points in the source domain close to negative points in the target domain and vice-versa and therefore the joint error of the best hypothesis on the two domains (as described in Section \ref{sec:related}) is large. We verify this by training a nearest neighbour (1-NN) classifier on the learned representation in the subsampled settings. The 1-NN classifier uses the source domain representations as the training data and the target domain representations as the test data. The accuracy of this classifier will suffer if in the learned representation the source domain points from one class are mapped close to the target domain points from the other class. The third row in the table, which is also averaged over the 45 tasks, shows a noticeable drop in the performance of the 1-NN classifier and indicates that this problem is present in the learned embeddings.

\begin{table*}[ht!]
\centering

\begin{tabular}{llllll}
\hline
            &   U $\rightarrow$ M &   M $\rightarrow$ U &   0.3 $\rightarrow$ U &   0.2 $\rightarrow$ U &   0.1 $\rightarrow$ U \\
\hline
 Source Accuracy &           98.06 &           98.83 &                98.3  &                97.56 &                95.3 \\
 Discriminator Accuracy          &            46.4 &           50.63 &                50.42 &                50.48 &                50.44 \\
 1-NN Accuracy          &            - &           - &                77.89 &                73.22 &                69.75 \\
\hline
\end{tabular}
\small 
\caption{Source accuracy and domain classifier accuracy of DANN on MNIST-USPS. The drop in source accuracy under severe subsampling is minimal compared to the drop in target accuracy in the previous table. The domain classifier accuracy is near random regardless of the amount of subsampling. The performance of a nearest neighbour classifier trained on the mapped source data points and evaluated on the mapped target data points degrades to a large extent with more subsampling.}
\label{table:m-u-dann}
\end{table*}

\subsection{Cross-Lingual Sentiment Classification}

This section includes cross-lingual sentiment analysis experiments on deep features. XED \citep{ohman-etal-2020-xed} is a sentence-level sentiment analysis dataset consisting of 32 languages. We use English as the source domain and another language as the target domain and create 9 binary classification domain adaptation tasks. 

There are a total of 1984 language pairs from each of the 32 languages to another. We chose 9 language pairs before running the experiment. The sentences in the dataset are labeled with one or more emotions \textit{anger}, \textit{anticipation}, \textit{disgust}, \textit{fear}, \textit{joy}, \textit{sadness}, \textit{surprise}, and \textit{trust}. Following the authors' guidelines we turn these multi-label classification tasks to binary classification by labeling data points positive if their original labels only include \textit{anticipation}, \textit{joy}, and \textit{trust}, and negative if the original labels only include \textit{anger}, \textit{disgust}, \textit{fear}, and \textit{sadness}. (\textit{Surprise} is discarded.) 

We perform 5 runs and in each one 100 points are randomly sampled from the target domain for validation and the rest are used for evaluation. Similar to the previous experiment, this validation set is used for all algorithms with hyperparameter configurations discussed in Appendix B to have a fair comparison. The representations for the source and target domain are 768-dimensional sentence embeddings obtained with BERT \citep{devlin-etal-2019-bert} models pre-trained on the corresponding languages. The experiment compares Label Alignment Regression with the following baselines.

\paragraph{Source:} This baseline trains a linear regression model with squared error (MSE) or a logistic regression model with crossentropy loss (CE) directly on the source domain and evaluates it on the target domain.

\paragraph{Adversarial-Refine \citep{conneau2017word}:} This baseline uses a domain-adversarial approach to learn a linear transformation that maps the source and target domain into a shared space. A refinement step then encourages the transformation to be orthogonal. This approach has shown promising results in several cross-lingual NLP tasks with word embeddings. We train a linear regression model with squared error (MSE) or a logistic regression model with crossentropy loss (CE) on the source data using the learned shared space and then evaluate it on the target domain.

\paragraph{CDAN \citep{long2017deep}:} Recall that a domain-adversarial method consists of a domain classifier (discriminator) that predicts whether a data point is from the source or the target domain, a generator that learns a shared embedding between the two domains, and a label predictor that performs classification on the task of interest using the generator's embedding. CDAN improves on DANN by conditioning the domain classifier on the label predictor's prediction. The motivation is the improvements observed by incorporating this modification to generative adversarial networks \citep{goodfellow2020generative,mirza2014conditional}.

\paragraph{$f$-DAL \citep{acuna2021f}:} This approach modifies DANN to use a separate domain classifier for each class which allows minimizing a family of divergence measures between the source and domain embeddings. We use $f$-DAL to minimize Pearson $\chi^2$ divergence as the authors had observed superior performance with this divergence in previous vision and NLP benchmarks.

\paragraph{IWDAN and IWCDAN \citep{tachet2020domain}:} These two methods modify DANN and CDAN by incorporating importance sampling to reduce deterioration in performance due to class imbalance. Computing the importance sampling ratios requires access to target domain labels. The authors propose to estimate the ratios and provide the theoretical requirements for the estimation to be accurate.

Table \ref{table:xed} shows $F_1$ scores for the nine tasks and the average score over the tasks. The first 8 rows are the baselines above and in the last row (LAR) we employ the Label Alignment Regression algorithm. The proposed algorithm achieves the highest $F_1$ score on seven out of the nine tasks as well as on average over the tasks. Adv - Refine, CDAN, and $f$-DAL do not provide a consistent benefit over No Adaptation. The two methods with importance weightings, IWDAN and IWCDAN, find better solutions than No Adaptation as well as the other domain-adversarial methods, suggesting that the reweighting in this algorithm, even if it is an estimate of the true importance weighting, is beneficial.


\begin{table*}[ht!]
\small

\centering

\begin{tabular}{llllll}
\hline
               & en $\rightarrow$ bg        & en $\rightarrow$ br        & en $\rightarrow$ cn        & en $\rightarrow$ da        & en $\rightarrow$ de        \\
\hline
 Source (MSE) & 55.22 (0.23) & 53.51 (0.53) & 4.48 (0.27)  & 64.75 (0.14) & 47.95 (0.52) \\
 Source (CE) & 51.55 (0.12) & 56.94 (0.04) & 0.37 (0.00)  & 64.00 (0.18) & 46.55 (0.22) \\
 Adv-R (MSE)      & 46.88 (0.61) & 46.20 (1.56) & 53.54 (1.74) & 51.98 (1.87) & 50.43 (1.07) \\
 Adv-R (CE)      & 45.12 (0.82) & 36.96 (0.95) & 49.87 (1.37) & 50.99 (1.31) & 43.62 (0.66) \\
 CDAN     & 49.99 (5.43) & 31.97 (8.43) & 21.93 (12.76) & 55.80 (5.12) & 33.52 (9.60) \\
 $f$-DAL     & 51.95 (0.88) & \textbf{57.79 (0.50)} & 15.04 (3.65)  & 64.23 (0.14) & 45.74 (0.82) \\
 IWDAN     & 56.16 (1.05) & 55.95 (0.50) & 46.78 (3.55) & 63.41 (1.20) & 46.30 (5.69) \\
 IWCDAN     & 57.70 (1.32) & 54.96 (1.44) & 42.08 (5.85) & 63.64 (1.49) & 41.86 (6.48) \\
 LAR          & \textbf{59.85 (0.08)} & 53.42 (0.33) & \textbf{65.10 (0.24)} & \textbf{65.58 (0.12)} & \textbf{60.46 (0.05)} \\
\hline
\end{tabular}

\begin{tabular}{llllll}
\hline
               & en $\rightarrow$ es        & en $\rightarrow$ fr        & en $\rightarrow$ he        & en $\rightarrow$ hu      & Average  \\
\hline
 Source (MSE) & 39.17 (0.93) & 49.89 (0.57) & 58.19 (0.23) & 59.66 (0.12) &   48.09 (0.37) \\
 Source (CE) & 47.09 (0.20) & 40.90 (0.36) & 58.23 (0.10) & 55.82 (0.06) & 46.83 (0.10) \\
 Adv-R (MSE)     & \textbf{48.37 (1.38)} & 46.45 (0.88) & 48.93 (1.03) & 47.15 (1.18) &   48.88 (0.69)  \\
 Adv-R (CE)     & 41.30 (1.40) & 44.18 (2.16) & 46.95 (1.39) & 44.06 (1.44) & 44.78 (0.40)  \\
 CDAN    & 21.26 (6.97)  & 36.30 (14.82) & 41.29 (9.49) & 34.95 (6.14) & 36.33 (3.73) \\
 $f$-DAL     & 48.25 (12.06) & \textbf{58.23 (0.62)}  & 60.10 (0.31) & 47.42 (1.28) & 49.86 (1.53) \\
 IWDAN     & 36.21 (2.93) & 54.04 (1.72) & 58.18 (0.97) & 56.00 (1.54) & 52.56 (0.84) \\
 IWCDAN     & 37.63 (2.45) & 52.14 (2.39) & 61.08 (0.56) & 55.82 (1.54) & 51.88 (1.24) \\
 LAR          & 43.47 (0.90) & 58.11 (0.24) & \textbf{61.24 (0.09)} & \textbf{59.68 (0.12)} &   \textbf{58.55 (0.17)} \\
\hline
\end{tabular}

\small 
\caption{$F_1$ score in percents on different XED source-language pairs. The numbers in parentheses are standard errors. Adv-R refers to Adversarial-Refine. MSE and CE denote Mean Squared Error and Cross Entropy loss. LAR (Label Alignment Regression) outperforms the baselines on average and on most of the tasks. For adversarial baselines we verified that the discriminator accuracy is near random in this experiment similar to the MNIST-USPS experiment.} \label{table:xed}
\vspace{-0.5cm}
\end{table*}




\section{Discussion and Limitations}

In this work, we proposed a domain adaptation regularization method based on the observation of label alignment property---the label vector of a dataset usually lies in the top left singular vectors of the feature matrix. We show that a regression algorithm in a standard supervised learning task actually contains an implicit regularization method to enforce such a property. Then we demonstrate how we can adapt such a regularization method in a domain adaptation setting. A critical difference between our algorithm and the conventional domain adaptation method is that we do not use regularization to adjust the representation learning. We observe that our algorithm does work well under high imbalance, where the conventional representation-based domain adaptation method fails. We also report improvement over baselines on cross-lingual sentiment analysis tasks.

Immediate next steps are providing an unsupervised hyperparameter selection strategy and extension to multi-class classification. The current method uses a validation set for choosing the hyperparameters. This validation set is remarkably small and on the NLP tasks we found little benefit from involving this set to train a semi-supervised method.

A better hyperparameter selection strategy can also help with applying the proposed method to multi-class classification problems. In Appendix \ref{appdx:additional} we discuss how the regularizer can be extended to multi-class problems using multiple outputs and one-hot labels. In general, the multi-class version would require tuning the hyperparameters separately for each output and the current grid search method would become expensive with large number of classes or fine grids. Using a fixed set of hyperparameter values for all the outputs, we show promising results on the MNIST-USPS benchmark in the same section and leave further exploration to future work.

Other future directions are to investigate the combination of our method and the conventional representation-based domain adaptation method, with the hope that the hybrid method has the advantage of both---it can provide a significant advantage in a broad range of domain-shift settings. It would also be interesting to have a more rigorous theoretical characterization regarding when the label alignment property holds and to what extent the label vector can align with the top singular vectors.


\acks{The authors would like to thank the members of Huawei Noah's Ark Lab, the Reinforcement Learning and Artificial Intelligence lab at the University of Alberta, and Torr Vision Group at the University of Oxford for helpful discussions. This research was funded and supported by Huawei, National Sciences and Engineering Research Council of Canada (NSERC), Canada CIFAR AI Chairs program, and Digital Research Alliance of Canada.}


\newpage

\appendix

\textbf{Table of Contents}
\begin{enumerate}
    \item \Cref{app:labelalign} provides theoretical characterization of label alignment property. 

    \item \Cref{app:proofs-solutions} provides theoretical proofs for relevant theorems from~\Cref{sec:theory}. 

    \item \Cref{sec:app-expdetails} provides experimental details for experiments in the main body of the paper.

    \item \Cref{appdx:additional} provides additional experiments in multiclass classification, parameter sensitivity, and regression.
\end{enumerate}

\section{Label Alignment Property}\label{app:labelalign}
In the proposition below we show that label alignment emerges if multiple features are highly correlated with the labels. The following lemma is needed for the proof.
\begin{lemma} If there are $k' < d$ orthonormal vectors $\{\nu_1, \cdots, \nu_{k'}\}$ such that $\norm{\repmat \nu_i} < \epsilon$ for all $i \in [k']$ then $\repmat_{n\times d}$ has at most $d-k'$ singular values greater than or equal to $\sqrt{k'}\epsilon$. 

\end{lemma}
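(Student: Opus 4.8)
The plan is to combine a Cauchy--Schwarz estimate on the \emph{span} of the $\nu_i$ with a dimension-counting argument against the span of the top right singular vectors of $\repmat$.

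First I would upgrade the hypothesis from the individual vectors $\nu_i$ to the whole subspace $W := \mathrm{span}\{\nu_1,\dots,\nu_{k'}\}$, which has dimension $k'$ by orthonormality. For any unit vector $\nu = \sum_{i=1}^{k'} c_i \nu_i \in W$ we have $\sum_i c_i^2 = 1$, so the triangle inequality and Cauchy--Schwarz give
\begin{align}
\norm{\repmat\nu} \le \sum_{i=1}^{k'} |c_i|\,\norm{\repmat\nu_i} < \epsilon \sum_{i=1}^{k'}|c_i| \le \epsilon\sqrt{k'}\Big(\sum_{i=1}^{k'}c_i^2\Big)^{1/2} = \sqrt{k'}\,\epsilon .
\end{align}
Thus $\repmat$ sends every unit vector of the $k'$-dimensional space $W$ strictly inside the ball of radius $\sqrt{k'}\,\epsilon$.

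Next I would argue by contradiction. Suppose $\repmat$ has at least $d-k'+1$ singular values that are $\ge \sqrt{k'}\,\epsilon$, and let $T$ be the span of the corresponding right singular vectors $v_i$, so $\dim T \ge d-k'+1$. Since $\dim W + \dim T \ge k' + (d-k'+1) = d+1 > d$, the subspaces $W$ and $T$ of $\Rb^d$ intersect in a nonzero vector; pick a unit vector $\nu$ in $W \cap T$. Expanding $\nu = \sum_i a_i v_i$ in the orthonormal basis of right singular vectors spanning $T$ (so $\sum_i a_i^2 = 1$ and every $\sigma_i$ appearing satisfies $\sigma_i \ge \sqrt{k'}\,\epsilon$), we get $\norm{\repmat\nu}^2 = \sum_i \sigma_i^2 a_i^2 \ge k'\epsilon^2 \sum_i a_i^2 = k'\epsilon^2$, i.e. $\norm{\repmat\nu} \ge \sqrt{k'}\,\epsilon$. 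This contradicts the displayed bound, so at most $d-k'$ singular values can be $\ge \sqrt{k'}\,\epsilon$.

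The argument is short, and I do not expect a real obstacle; the only care needed is the bookkeeping — checking that the Cauchy--Schwarz constant is exactly $\sqrt{k'}$ and that the dimension count forces a strict (nonzero) intersection, which is where the assumed negation "$d-k'+1$" is used. An equivalent, slicker route is to invoke the Courant--Fischer min--max characterization $\sigma_{d-k'+1} = \min_{\dim S = k'}\, \max_{0\ne\nu\in S} \norm{\repmat\nu}/\norm{\nu}$ and substitute $S = W$ directly; I would mention this as the alternative, but the contradiction version above is self-contained.
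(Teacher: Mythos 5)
Your proof is correct, but it follows a different route from the paper's. The paper treats the $k'$ vectors collectively: it forms the matrix $N \in \Rb^{d\times k'}$ with orthonormal columns, invokes the PCA/trace-minimization fact (citing Bishop, Sec.~12.1.2) that $\min_N \norm{\repmat N}_F = \sqrt{\sum_{i=d-k'+1}^d \sigma_i^2} \ge \sigma_{d-k'+1}$, and then gets the $\sqrt{k'}$ factor by a pigeonhole on the columns (if the Frobenius norm is at least $\sqrt{k'}\epsilon$, some single column must have norm at least $\epsilon$). You instead work with a single unit vector: the $\sqrt{k'}$ enters via Cauchy--Schwarz on the coefficients of a unit vector in $W=\mathrm{span}\{\nu_1,\dots,\nu_{k'}\}$, and the contradiction comes from a dimension count forcing $W$ to meet the span of the top $d-k'+1$ right singular vectors --- i.e., you are essentially re-deriving the Courant--Fischer characterization of $\sigma_{d-k'+1}$ by hand, as you note at the end. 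The trade-off: the paper's argument is shorter but leans on an external variational result stated for the Frobenius norm, while yours is fully self-contained and only uses orthonormal expansions and a pigeonhole on dimensions. Both correctly exploit the strict inequality $\norm{\repmat\nu_i}<\epsilon$ to close the contradiction, and both use the negation ``at least $d-k'+1$ large singular values'' in the same place. No gaps.
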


\begin{proof}
Suppose $\sigma_1, \cdots, \sigma_d$ are the singular values of $\repmat$ sorted in descending order.
The matrix $N_{d\times k'}$ with orthonormal columns that minimizes $\norm{\repmat N}_2$ is the matrix of the last $k'$ right singular vectors, and $\norm{\repmat N}_2 = \sqrt{\sum_{i=d-k'+1}^d \sigma_i^2} \geq \sigma_{d-k'+1}$ (This easily follows from Section 12.1.2 by \citet{bishop2006pattern}). If $\sigma_{d-k'+1} \geq \sqrt{k'}\epsilon$ then for any $N$ with orthonormal columns we have $\norm{\repmat N}_2 \geq \sqrt{k'}\epsilon \implies \norm{\repmat N}_\infty \geq \epsilon$ which contradicts the assumption.
\end{proof}
\begin{proposition}
Suppose $\norm{\labelvec} =1$ and that columns of $\repmat$ are normalized. If $\repmat_{n\times d}$ has $\hat{k} \leq d $ columns $\{\phi_{1}, \cdots, \phi_{ \hat{k}}\}$ where $|\phi_{i}^\top \labelvec| > 1 - \delta$ for all $i \in [\hat{k}]$ and
\begin{itemize}
    \item $0 < \delta < 0.2$
    \item $\hat{k} > 16\delta^2/(-15\delta^2-2\delta+1)$
    \item $d>16\delta^2(\hat{k}-1)$
\end{itemize}  then the norm of the projection of $y$ on the span of the first $k=d-\hat{k}+1$ left singular vectors of $\repmat$ is greater than
\begin{align*}
    \sqrt{\frac{\hat{k}(1-\delta)^2 - 16\delta^2(\hat{k}-1)}{d - 16\delta^2(\hat{k}-1)}}.
\end{align*}
\end{proposition}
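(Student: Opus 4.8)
The plan is to extract two facts from the hypothesis that $\hat k$ columns of $\Phi$ are almost parallel to $y$, and then combine them through the spectral identity $\|\Phi^\top y\|^2=\sum_{i=1}^d\sigma_i^2(u_i^\top y)^2$: (i) via the preceding Lemma, $\Phi$ has at most $k=d-\hat k+1$ singular values that are not tiny; and (ii) $\|\Phi^\top y\|$ is large. Fact (ii) cannot be explained by the tiny tail singular values, so most of the mass of $\{u_i^\top y\}$ must sit on the indices $i\le k$, which is the claim.

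Setup and the Lemma. Flipping the sign of a column of $\Phi$ changes neither the singular values nor the left singular vectors (only the corresponding right singular vector is re-signed), so I may assume $a_i:=\Phi_{:,i}^\top y\in(1-\delta,1]$ for all $i\in[\hat k]$; write $\Phi_{:,i}=a_i y+r_i$ with $r_i\perp y$ and $\|r_i\|^2=1-a_i^2<2\delta-\delta^2$. I would then produce $\hat k-1$ orthonormal vectors in $\Rb^d$, supported on the first $\hat k$ coordinates, along which $\Phi$ is small: let $\nu=\sum_{i\le\hat k}c_i e_i$ range over an orthonormal basis of the zero-sum subspace $\{\sum_i c_i=0\}$. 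Because $\sum_i c_i=0$ kills the leading $y$-term, $\Phi\nu=\big(\sum_i c_i(a_i-1)\big)y+\sum_i c_i r_i$, and $r_i\perp y$ gives $\|\Phi\nu\|^2=\big(\sum_i c_i(a_i-1)\big)^2+\|\sum_i c_i r_i\|^2$, which I would bound with Cauchy--Schwarz, $|a_i-1|<\delta$, and $|r_i^\top r_j|\le\|r_i\|\|r_j\|<2\delta-\delta^2$. Feeding the resulting bound into the Lemma with $k'=\hat k-1$ shows at most $d-(\hat k-1)=k$ singular values of $\Phi$ are large, hence $\sigma_i^2<16\delta^2(\hat k-1)$ for all $i>k$; the numerical hypotheses $0<\delta<0.2$ and the lower bound on $\hat k$ are exactly what is needed for the bound coming out of the Lemma to be at most $16\delta^2(\hat k-1)$.

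Conclusion. Summing squared correlations over the special columns, $\|\Phi^\top y\|^2\ge\sum_{i\le\hat k}a_i^2>\hat k(1-\delta)^2$. Put $P:=\sum_{i=1}^k(u_i^\top y)^2$, the squared norm of the projection of $y$ onto the first $k$ left singular vectors. Expanding $\|\Phi^\top y\|^2=\sum_{i=1}^d\sigma_i^2(u_i^\top y)^2$ and splitting at $i=k$: the head is $\le\sigma_1^2 P\le\|\Phi\|_F^2 P=dP$ (columns are unit-norm, so $\|\Phi\|_F^2=d$), and the tail is $\le16\delta^2(\hat k-1)\sum_{i>k}(u_i^\top y)^2\le16\delta^2(\hat k-1)(1-P)$. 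Hence $\hat k(1-\delta)^2<dP+16\delta^2(\hat k-1)(1-P)$, i.e. $\big(d-16\delta^2(\hat k-1)\big)P>\hat k(1-\delta)^2-16\delta^2(\hat k-1)$; dividing by $d-16\delta^2(\hat k-1)>0$ and taking square roots yields the claimed lower bound on $\sqrt P$.

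Main obstacle. The delicate point is the estimate of $\|\Phi\nu_j\|$ and, through the Lemma, of the tail singular values: a crude Cauchy--Schwarz bound on $\|\sum_i c_i r_i\|^2$ only gives something of order $\hat k\delta$, because the residuals $r_i$ need not be near-orthogonal and can reinforce, which is too weak to reach $\sigma_i^2<16\delta^2(\hat k-1)$. One must exploit the cancellation forced by $\sum_i c_i=0$ together with $\|c\|=1$ and the precise sizes $\|r_i\|^2<2\delta-\delta^2$, and this is precisely where $\delta<0.2$ and $\hat k>16\delta^2/\big((1-5\delta)(1+3\delta)\big)$ are consumed. Everything after that bound is routine bookkeeping with the spectral identity and $\|\Phi\|_F^2=d$.
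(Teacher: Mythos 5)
Your overall architecture is the same as the paper's: use the Lemma to push all but the top $k=d-\hat k+1$ singular values below $4\delta\sqrt{\hat k-1}$, lower-bound $\|\Phi^\top y\|^2$ by $\hat k(1-\delta)^2$ via the correlated columns, split $\sum_i\sigma_i^2(u_i^\top y)^2$ at index $k$, bound the head by $dP$ using $\sum_i\sigma_i^2=d$ and the tail by $16\delta^2(\hat k-1)(1-P)$, and rearrange; the sign-flip reduction at the end is also identical. The only structural difference is your choice of test vectors for the Lemma (unit vectors in the zero-sum subspace supported on the first $\hat k$ coordinates, versus the paper's vectors orthogonal to the span of columns $\hat k,\dots,d$), which is a reasonable variant. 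Everything downstream of the tail singular-value bound is correct and matches the paper.

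The genuine gap is that the one step carrying all the quantitative content --- showing $\|\Phi\nu_j\|\le 4\delta$ so that the Lemma yields $\sigma_i^2<16\delta^2(\hat k-1)$ for $i>k$ --- is never established. You candidly observe that Cauchy--Schwarz only gives $\|\Phi\nu\|^2=O(\hat k\delta)$ and assert that ``one must exploit the cancellation forced by $\sum_i c_i=0$,'' but you do not exhibit that cancellation, and in fact it is not available: already for $\hat k=2$ the unique zero-sum direction is $\nu=(e_1-e_2)/\sqrt2$, and taking $\Phi_{:,1}=a y+r_1$, $\Phi_{:,2}=a y+r_2$ with $a$ just above $1-\delta$ and $r_1\perp r_2$ both orthogonal to $y$ gives $\|\Phi\nu\|=\|r_1-r_2\|/\sqrt2\approx\sqrt{2\delta-\delta^2}$, which exceeds $4\delta$ for all $\delta<2/17$. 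So the bound you defer is false as stated for your construction, not merely unproven; what this route actually delivers for the tail is $\sigma_i^2=O(\hat k\,\delta)$, the wrong order in $\delta$. (You have in fact put your finger on the most delicate point of the paper's own argument, where the $O(\delta)$ bound on $|\Phi_{:,i}^\top\nu_j|$ is obtained by passing from $\|\Phi_{:,i}-\Phi_{:,j}\|^2<4\delta$ to $\|\Phi_{:,i}-\Phi_{:,j}\|<4\delta$; but a review of your proposal must record that the estimate on which the stated constant $16\delta^2(\hat k-1)$ depends is missing, and that the ``main obstacle'' you name is precisely the unresolved part of the proof.) Separately, your claim that the hypotheses $\delta<0.2$ and $\hat k>16\delta^2/(1-2\delta-15\delta^2)$ are consumed in this estimate is a mis-attribution: they serve only to keep the final numerator positive, while $d>16\delta^2(\hat k-1)$ justifies the division, as you use correctly.
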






\begin{proof} First suppose the dot products in the statement are positive.

Note that for all $i \in \hat{k}$ we have $\norm{\phi_{i} - \labelvec}_2^2 = (\phi_{i} - \labelvec)^\top (\phi_{i} - \labelvec) = \phi_{i}^\top \phi_{i} + \labelvec^\top \labelvec - 2 \phi_{i}^\top \labelvec = 2 - 2\phi_{i}^\top \labelvec < 2\delta$. Due to triangle inequality, $\norm{\phi_{i} - \phi_{j}}_2^2 \leq \norm{\phi_{i} - y}_2^2 + \norm{\phi_{j} - y}_2^2 < 4\delta$.

The span of $\phi_{\hat{k}}, \phi_{\hat{k}+1},\cdots\phi_{d}$ has at most $d-\hat{k}+1$ dimensions. Choose $\hat{k}-1$ orthonormal vectors $\nu_1,\cdots,\nu_{\hat{k}-1} \in \Rb^d$ that are perpendicular to this subspace. Then for any $i,j \in [\hat{k}-1]$ we have $\phi_{i}^\top \nu_j = (\phi_{i} - \phi_{\hat{k}} + \phi_{\hat{k}})^\top \nu_j = (\phi_{i} - \phi_{ \hat{k}})^\top \nu_j + 0 \leq \norm{\phi_{ i} - \phi_{ \hat{k}}} < 4\delta$. Therefore $\norm{\repmat\nu_j} < 4\delta \sqrt{\hat{k}-1}$. Putting this orthonormal basis in the lemma above gives that $\repmat$ has at most $d-\hat{k}+1$ singular values greater than or equal to $4\delta(\hat{k}-1)$.

Now see that $\norm{\repmat^\top \labelvec}^2 = \norm{\sum_{i=1}^{d}\phi_{i}^\top \labelvec}^2$ and is also equal to $\sum_{i=1}^{d}(\s_i \labelvec^{\Umat}_i)^2$. Therefore $\sum_{i=1}^{d}(\s_i \labelvec^{\Umat}_i)^2 \geq  \norm{\sum_{i=1}^{\hat{k}}\phi_{i}^\top \labelvec}^2 > \hat{k}(1-\delta)^2$. Since the columns are normalized, $\sum_{i=1}^d \sigma_i^2 = \norm{\repmat}_F = d$. In addition, we have shown that the last $\hat{k}-1$ singular values are smaller than $4\delta\sqrt{\hat{k}-1}$. Define $\hat{\labelvec}$ as the projection of $\labelvec$ on the first $d-\hat{k}+1$ singular vectors of $\repmat$. Then we have $\hat{k}(1-\delta)^2 < \sum_{i=1}^{d}(\s_i \labelvec^{\Umat}_i)^2 = \sum_{i=1}^{d-\hat{k}+1}(\s_i \labelvec^{\Umat}_i)^2 + \sum_{i=d-\hat{k}+2}^{d}(\s_i \labelvec^{\Umat}_i)^2 < d\norm{\hat{y}}^2 + 16\delta^2(\hat{k}-1)(1-\norm{\hat{y}}^2)$. Rearranging the terms (with the extra conditions in the proposition statement) gives
\begin{align*}
    \norm{\hat{y}} > \sqrt{\frac{\hat{k}(1-\delta)^2 - 16\delta^2(\hat{k}-1)}{d - 16\delta^2(\hat{k}-1)}}
\end{align*}

The inequality is tight in the extreme case where $\hat{k} = d$ and $\delta\to 0$ which results in the label vector being fully in the direction of the first left singular vector and all the other singular values tending to zero.

Now suppose some of the dot products in the statement are negative. We can multiply those columns with $-1$ and prove the result above for this modified matrix. The result holds for the original matrix since this operation only changes the right singular vectors of $\repmat$ and does not affect the left singular vectors or the singular values.
\end{proof}
Let us demonstrate the emergence of alignment and the behavior of the bound when multiple features are highly correlated with the output. In this toy experiment the label vector is sampled from a 1000-dimensional Gaussian distribution $\Nc(\zeros, \eye)$ with mean zero and standard deviation 1 and then normalized to norm one. The matrix $\repmat$ has 10 columns. The first 9 columns are sampled from $\Nc(y, s^2\eye)$ with mean $y$ and a small standard deviation $s$ and the other column is sampled from $\Nc(\zeros, \eye)$. All columns are then normalized to norm one. Note that the proposition above does not assume Gaussian features. Figure \ref{fig:theory} shows the norm of the projection of the label vector on the first two singular vectors at different levels of $s$ and its relationship with $\delta$.

\begin{figure*}[t]
\begin{subfigure}{.49\textwidth}
  \centering
  \includegraphics[width=.9\linewidth]{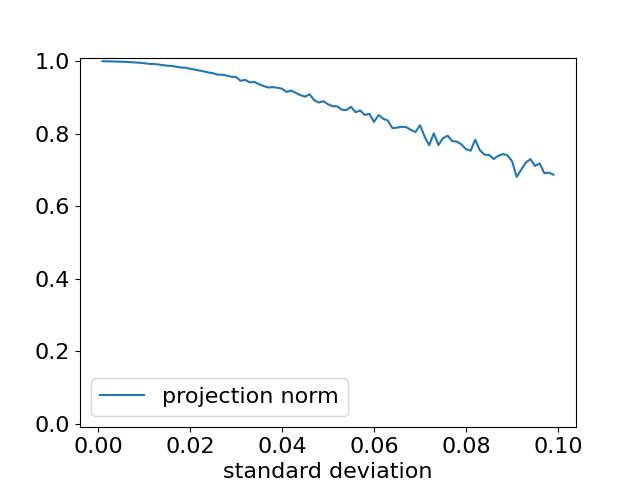}
\end{subfigure}
\begin{subfigure}{.49\textwidth}
  \centering
  \includegraphics[width=.9\linewidth]{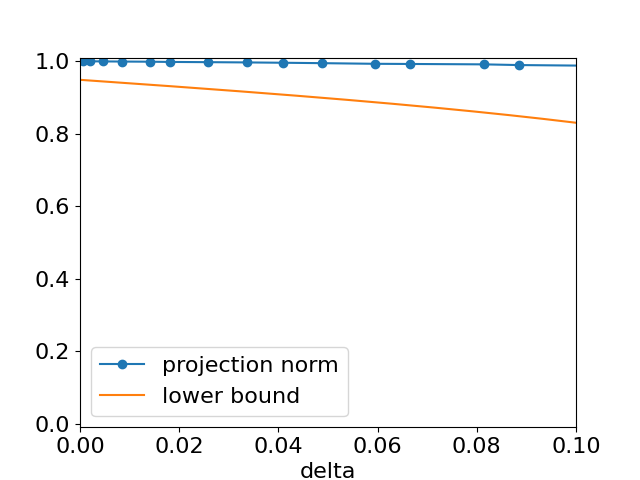}
\end{subfigure}
\caption{Projection of the label vector on the top two singular vectors in the Gaussian example. For small values of standard deviation (where the labels are highly correlated with the features) and small values of $\delta$, the label vector is mostly in the direction of the top two singular vectors. The lower bound is applicable in this regime and is close to one.}
\label{fig:theory}
\end{figure*}

\section{Proofs in Section~\ref{sec:theory}}\label{app:proofs-solutions}

\newcommand{\tglastk}{\tilde{\Sigma}_{-\tilde{k}}}
\newcommand{\sctopk}{\Sigma_k}
\newcommand{\tgtopk}{\tilde{\Sigma}_{\tilde{k}}}

\newcommand{\tgV}{\tilde{V}}
\newcommand{\tgU}{\tilde{U}}
\newcommand{\tgS}{\tilde{\Sigma}}


\begin{lemma}\label{lem:expect-xy}
In the rotated Gaussian example in Section \ref{sec:theory}, $\Eb_{x, y} [x y] = \sqrt{\frac{2}{\pi}} s_1 p_1.$
\end{lemma}
\begin{proof}
\begin{align}
\frac{1}{n}\Phi^\top y &\approx \Eb_{x, y} [x y] \\
& = \int_x \int_y x y p_{\Sc}(x|y)p(y) dy dx \\
& = \int_x x p_{\Sc}(x|y=1)p(y=1)- x p_{\Sc}(x|y=-1)p(y=-1) dx \\
& = \int_x x \cdot 2 \Nc(0, Q) \big(\mathds{1}(x_1^P > 0)p(y=1)- \mathds{1}(x_1^P < 0)p(y=-1)\big) dx, \label{eq:lemma9_1}
\end{align}
where we plug into the definition~\eqref{eq:def-pxgiveny} to get the last equality. Further note that $\mathds{1}(x_1^P < 0)=1-\mathds{1}(x_1^P > 0)$ and  plug this into above,
\begin{align}
\eqref{eq:lemma9_1} & = \int_x x \cdot 2 \Nc(0, Q) \big(\mathds{1}(x_1^P > 0)-p(y=-1)\big) dx \\
& = \int_x x \cdot 2 \Nc(0, Q)\mathds{1}(x_1^P > 0)dx \\
& = 2P^\top \int_z z \cdot \frac{1}{2\pi s_1 s_2}\exp\left( -\frac{1}{2 s_1^2} z_1^2 -\frac{1}{2 s_2^2}z_2^2 \right) \mathds{1}(z_1 > 0)dz
\end{align}
where in the last equality we let $z=Px$, then $x^P_1 = x^\top p_1 = z^\top P p_1 = z_1$. The integral above is a vector with two elements because it includes $z$. The first element is
\begin{align}
& \int_{z_1} \int_{z_2} z_1 \cdot \frac{1}{2\pi s_1 s_2}\exp\left( -\frac{1}{2 s_1^2} z_1^2 -\frac{1}{2 s_2^2}z_2^2 \right) \mathds{1}(z_1 > 0)dz_1 dz_2 \\
&= \int_{z_1} z_1 \cdot  \frac{1}{\sqrt{2\pi}s_1}\exp\left( -\frac{1}{2 s_1^2} z_1^2\right) \mathds{1}(z_1 > 0)dz_1 \\
&= \frac{1}{2} \int_{0}^{+\infty} z_1 \cdot  \frac{\sqrt{2}}{\sqrt{\pi}s_1}\exp\left( -\frac{1}{2 s_1^2} z_1^2\right) dz_1 \\
&=\frac{1}{2} s_1\sqrt{\frac{2}{\pi}} 
\end{align}
The last equality is because the integration is the mean of half-normal distribution. The second element would become zero as written below and noting that $\Eb[z_2]$ is the mean of a zero-mean Gaussian random variable:
\begin{align}
& \int_{z_1} \int_{z_2} z_2 \cdot \frac{1}{2\pi s_1 s_2}\exp\left( -\frac{1}{2 s_1^2} z_1^2 -\frac{1}{2 s_2^2}z_2^2 \right) \mathds{1}(z_1 > 0)dz_1 dz_2 \\
&= \int_{z_1} \frac{1}{\sqrt{2\pi}s_1}\exp\left( -\frac{1}{2 s_1^2} z_1^2\right) \mathds{1}(z_1 > 0)dz_1 \Eb[z_2] = 0 
\end{align}
Then 
\begin{equation}
\Eb_{x, y} [x y] = 2P^\top \begin{bmatrix} \frac{1}{2} s_1 \sqrt{\frac{2}{\pi}} \\
0 \end{bmatrix} = \sqrt{\frac{2}{\pi}} s_1 p_1.
\end{equation}
\end{proof}

\rGauss*
\begin{proof}
We rewrite \eqref{eq:label_alignment} as:
\begin{align}
(s_1^2 v_1 v_1^\top + \lambda\tilde{s}_2^2 \tilde{v}_2 \tilde{v}_2^\top)\widehat{w^*} = \sqrt{\frac{2}{\pi}} {s_1} v_1.
\end{align}
Suppose $\widehat{w^*} = w_1 \tv_1 + w_2 \tv_2$ with $w_1 \in \Rb, w_2 \in \Rb$, then the equation above becomes:
\begin{align}\label{eq:find_k1}
s_1^2 v_1 (v_1^\top \widehat{w^*}) + \lambda\tilde{s}_2^2 w_2 \tilde{v}_2 = \sqrt{\frac{2}{\pi}} {s_1} v_1.
\end{align}
Apply $v_2^\top$ on both sides we have:
\begin{align}
\lambda\tilde{s}_2^2 w_2 \tilde{v}_2^\top v_2 = 0.
\end{align}
Since $\tilde{v}_2$ is not parallel to $v_1$, we must have $\tilde{v}_2^\top v_2 \neq 0$ and thus $w_2 = 0$. To obtain the exactly value of $w_1$, solve \eqref{eq:find_k1} by setting $w_2=0$, we have:
$
s_1^2 (v_1^\top \tv_1) w_1 = \sqrt{\frac{2}{\pi}} {s_1}, \, w_1 = \sqrt{\frac{2}{\pi}} \frac{1}{s_1 v_1^\top \tv_1}.
$
\end{proof}

\alignment*

\begin{proof}
From the definition of $\widehat{w^*}$, we see that:
\begin{align}
(\sum_{i\leq k} \sigma_i^2 v_i v_i^\top + \lambda \sum_{j > \tk} \tsigma_j^2 \tv_j \tv_j^\top ) \widehat{w^*}= \sum_{i\leq k} \sigma_i y_i^U v_i.
\end{align}
Decompose $\ww = \sum_{i\leq d} w_i \tv_i$. From the equation above we find:
\begin{align}\label{eq:sol_wstar}
\sum_{i\leq k} \sigma_i^2 v_i v_i^\top \ww + \lambda \sum_{j > \tk} \tsigma_j^2 \tv_j w_j = \sum_{i\leq k} \sigma_i y_i^U v_i.
\end{align}
Applying $v_m^\top$ only both sides with $m > k$, we have:
\begin{align}
\sum_{j > \tk} \tsigma_j^2 v_m^\top \tv_j w_j = 0, \, m > k,
\end{align}
which can be written as:
\begin{align}
(V'_{d-k})^\top \tilde{V'_{d-k}} \, {\rm diag}(\tsigma_{\tk + 1}^2, \dots, \tsigma_d^2) \begin{bmatrix}
w_{\tk + 1} \\
\dots \\
w_d
\end{bmatrix} = \zero.
\end{align}
Note that multiplying by ${\rm diag}(\tsigma_{\tk + 1}^2, \dots, \tsigma_d^2)$ does not change the invertibility. By assumption we must have $w_{k+1} = \dots = w_d = 0$, and \eqref{eq:sol_wstar} becomes independent of $\lambda$.
\end{proof}


\optimal*

\begin{proof}
With the assumption of Theorem \ref{thm:alignment}, we have: $v_i^\top \ww = y_i^U / \sigma_i$, for $i\leq k$. Then we can write down the optimal solutions as $w_\Sc^* = V_k \mu_k$, $w_\Tc^* = \tV_\tk \tilde{\mu}_{\tk}$. Since $V_k^\top \tV_k$ is invertible (and under the assumptions of Theorem \ref{thm:alignment}), then we obtain the label alignment regularized result 
\begin{align}\label{eq:sol_label_align}
\ww = \tV_k (V_k^\top \tV_k)^{-1} \mu_k.
\end{align}
Note that the solution is independent of the hyperparameter $\lambda$. 

From $\ww = \tV_k (V_k^\top \tV_k)^{-1} \mu_k$, and $w_\Tc^* = \tV_\tk \tilde{\mu}_{\tk}$, the equation $\ww = c w_\Tc^*$ holds iff:
\begin{align}
\tV_k((V_k^\top \tV_k)^{-1} \mu_k  - c \tilde{\mu}_k) = \zero,
\end{align}
or in other words, $(V_k^\top \tV_k)^{-1} \mu_k  =  c\tilde{\mu}_k + q$, where $q \in {\rm null}(\tV_k) = \{\zero\}$. 
\end{proof}


\invertible*

\begin{proof}
$V_k^\top \tV_k$ can be written as $[v_i^\top \tv_j]$ with $i, j \in [k]$. From the assumption, $V_k^\top \tV_k = \eye + \epsilon \Delta$ where $\eye$ is the identity matrix and $\Delta$ is a $k\times k$ matrix with each element $|\Delta_{ij}| \leq \epsilon$. Suppose $(\eye + \epsilon \Delta) x = \zero$, then $x = -\epsilon \Delta x$, taking the norm on both sides we have:
\begin{align}
\|x\| &= \epsilon \| \Delta x\| \leq \epsilon \|\Delta \| \cdot \|x\| \\
&\leq \epsilon \|\Delta \|_F \cdot \|x\| \leq \epsilon k \cdot \|x\| < \|x\|,
\end{align}
which gives $x = \mathbf{0}$. Therefore, $\eye + \epsilon \Delta$ is invertible. Similarly, $(V'_{d-k})^\top \tilde{V}_{d-k}'$ is also invertible.
\end{proof}

\invertibleStrong*

\begin{proof}
It suffices to show that
$P(\det(V_k^\top \tV_k) = 0) = 0$. Note that $\det(V_k^\top \tV_k) = 0$ can be rewritten as:
\begin{align}
\det(\tv_1^{V_k} \dots \tv_k^{V_k}) = 0, 
\end{align}
and thus 
\begin{align}
P(\det(V_k^\top \tV_k) = 0) \leq & p(\tv_1^{V_k} = 0) + p(\tv_2^{V_k} \in {\rm span}(\tv_1^{V_k})|\tv_1) +  \dots + \\
& p(\tv_k^{V_k} \in {\rm span}(\tv_1^{V_k}, \dots, \tv_{k-1}^{V_k})|\tv_1, \dots, \tv_{k-1}).
\end{align}
Since each condition gives a sub-manifold with a smaller dimension and the probability distributions are continuous, from Sard's theorem \citep[e.g.][]{guillemin2010differential}, each probability is zero. Therefore, $P(\det(V_k^\top \tV_k) = 0) = 0$. %
\end{proof}

\section{Experiment Details}\label{sec:app-expdetails}

This section outlines dataset details, hyperparameter settings, and other design choices in the experiments.

\paragraph{Label Alignment in Real-World Tasks.} We used the following tasks in this experiment:

\begin{enumerate}
    \item \textbf{UCI CT Scan:} A random subset of the CT Position dataset on UCI \citep{graf20112d}. The task is predicting a location of a CT Slice from histogram features.
    \item \textbf{Song Year:} A random subset of the training portion of the Million Song dataset \citep{Bertin-Mahieux2011}. The task is predicting the release year of a song from audio features.
    \item \textbf{Bike Sharing:} A random subset of the Bike Sharing dataset on UCI \citep{fanaee2014event}. The task is predicting the number of rented bikes in an hour based on information about weather, date, and time.
    \item \textbf{MNIST:} The task is classifying any pair of two digits in MNIST.
    \item \textbf{USPS:} The task is classifying any pair of two digits in USPS.
    \item \textbf{CIFAR-10:} The task is classifying airplane and automobile in CIFAR-10 dataset using features from a ResNet-18 pretrained on ImageNet.
    \item \textbf{CIFAR-100:} The task is classifying beaver and dolphin in CIFAR-100 dataset using features from a ResNet-18 pretrained on ImageNet.
    \item \textbf{STL-10:} The task is classifying airplane and bird in STL-10 dataset using features from a ResNet-18 pretrained on ImageNet.
    \item \textbf{XED (English):} The English corpus from XED datsets whose details are discussed in the main paper. The features are sentence embeddings extracted from BERT.
    \item \textbf{AG News:} A random subset of the first two classes (World and Sports) in AG News document classification dataset. The features are obtained by feeding the document text to BERT.
\end{enumerate}

All datasets have an extra constant 1 feature to account for the bias unit. Rank is computed as the number of singular values larger than $\sigma_1 * \max(n, d) * 1.19209e-07$. This is the default numerical rank computation method in the Numpy package.

\paragraph{MNIST-USPS.} 

DANN uses a one-layer ReLU neural network. This is the Shallow DANN architecture suggested by the original authors \citep{ganin2016domain}. We swept over values of 128, 256, 512, and 1024 for the depth of the hidden layer.

The neural network is trained for 10 epochs using SGD with batch size 32, learning rate 0.01, and momentum 0.9. This model already achieves near perfect accuracy on the source domain.

Candidate hyperparameter values for Label Alignment Regularizer were $\{1e-1, 1e+1, 1e+3\}$ for $\lambda$ and $\{8, 16, 32, \cdots\}$ up to the rank of $\Phi$ or $\tilde{\Phi}$ for $k$ and $\tilde{k}$.

Although the number of hyperparameter configurations is greater for our method, this experiment is in favor of DANN if we take runtime into account.

The linear model is trained using full-batch gradient descent for 5000 epochs with learning rate $1/(2\sigma_1)$.

\paragraph{Sentiment Classification.}

For the domain-adversarial baseline \citep{conneau2017word} we sweep over values of $\{1e-3, 1e-2, 1e-1\}$ for $\beta$. The parameter controls the degree of orhtogonality of the transformation that maps the source and target embeddings into a common space.

The models used in No Adaptation, Adv - Refine, and Label Alignment Regression are linear regression or logistic regression (on the nonlinear extracted representations). These models are trained with learning rate $1/(2\sigma_1)$ (MSE loss) and $1e-2$ (CE loss) and momentum $0.9$.

For CDAN and $f$-DAL we sweep over regularization coefficients $\{1e-4, 1e-2, 1\}$ with a one-hidden-layer ReLU network. This is the architecture suggested by \citep{ganin2016domain} for domain adaptation with a shallow network.

Candidate hyperparameter values for Label Alignment Regularizer were $\{1e-1, 1e+1, 1e+3\}$ for $\lambda$ and $\{8, 16, 32, \cdots\}$ up to the rank of $\Phi$ or $\tilde{\Phi}$ for $k$ and $\tilde{k}$.

Similar to the previous experiment, the hyperparameter grid search in this experiment is in favor of the baselines if we take runtime into account.

\section{Additional Experiments} \label{appdx:additional}

\subsection{Multi-Class Classification}
We also try to generalize the formulation of the label alignment regression to a multiclass setting following the derivation in Equation \ref{eq:lr-rewrite} . Given a dataset comprising $n$ samples, each characterized by $d$ features, we denote the feature matrix by $\Phi \in \mathbb{R}^{n \times d}$. In a classification context involving $c$ distinct classes and employing a one-versus-all strategy, the target matrix $Y$, which adopts a $\pm 1$ style one-hot encoding scheme, is of dimension $\mathbb{R}^{n \times c}$. Consequently, the weight matrix $W$, which maps the feature space to the class labels, is represented as $\mathbb{R}^{d \times c}$. Then the learning objective can be formulated as:
\begin{equation}
\begin{aligned}
\min_W \|\Phi W - Y\|^2 &= \min_W \|U \Sigma V^T W - Y\|^2 \\
&= \min_W \| \Sigma V^T W - U^T y\|^2 \\
&= \min_W \| \Sigma W^V - Y^U\|^2 \\
&= \min_W \sum_{j=1}^{c}\sum_{i=1}^{d} (\sigma_i W^Y_{ij} - Y^U_{ij})^2 + \sum_{j=1}^{c}\sum_{i=d+1}^{n} (Y^U_{ij})^2
\end{aligned}
\end{equation}

The notation $\|A\|^2$ signifies the 2-norm of matrix $A$, encapsulating the square root of the sum of the squares of its elements. In this setup, $W^V$ corresponds to a weight matrix with dimensions $\mathbb{R}^{d \times c}$, while $Y^U$ denotes a modified target matrix also of dimension $\mathbb{R}^{n \times c}$. Thus, the expression $(\Sigma W^V - Y^U)$, representing the discrepancy between the projected feature space and the modified target matrix, retains the dimensionality of $\mathbb{R}^{n \times c}$, highlighting the alignment or misalignment of the model predictions with the modified targets in the given multidimensional space.

Assume $k$ is the same for every one vs all setting and $k<d$, we can obtain
\begin{equation}
\begin{aligned}
\min_W \|\Phi W - Y\|^2 &= \min_W \sum_{j=1}^{c}\sum_{i=1}^{d} (\sigma_i W^Y_{ij} - Y^U_{ij})^2 + \sum_{j=1}^{c}\sum_{i=d+1}^{n} (Y^U_{ij})^2\\
&= \min_W \sum_{j=1}^{c}\sum_{i=1}^{d} (\sigma_i W^V_{ij} - Y^U_{ij})^2\\
&= \min_W \sum_{j=1}^{c}\sum_{i=1}^{k} (\sigma_i W^V_{ij} - Y^U_{ij})^2 + \sum_{j=1}^{c}\sum_{i=k+1}^{d} (\sigma_i W^V_{ij})^2.
\end{aligned}
\end{equation}

Therefore the final objective function looks like
\begin{equation}
\begin{aligned}
\min_W \|\Phi W - Y\|^2 
 - \sum_{j=1}^{c}\sum_{i=k+1}^{d} (\sigma_i W^V_{ij})^2  + \lambda\sum_{j=1}^{c}\sum_{i=\Tilde{k}+1}^{d} (\Tilde{\sigma}_i W^{\Tilde{V}}_{ij})^2.
\end{aligned}
\end{equation}

We also validated the label alignment property by computing $k(0.1)$ for the one-versus-all label vector corresponding to each digit similar to the binary classification case in Table \ref{tab:alignment_tasks}. The value of $k(0.1)$ for all the digits was $1$. 

Then, we compare the classification performance of our label alignment regression to DANN in the multiclass MNIST-USPS classification setting. Our method outperforms DANN by a large margin as shown by the evaluation results in Table \ref{table:m-u-multiclass}.
\begin{table*}[ht!]
\centering

\begin{tabular}{llllll}
\hline
            &   U $\rightarrow$ M &   M $\rightarrow$ U &   0.3 $\rightarrow$ U &   0.2 $\rightarrow$ U &   0.1 $\rightarrow$ U \\
\hline
 No Adaptation (NN) &           35.66&           52.46&                47.24&                45.06&                19.48\\
 DANN          &           41.32&           53.46&                49.88&                43.09&                32.01\\
\hline
 No Adaptation (Linear) &           37.53&           54.41&                48.71&                46.21&                41.54\\
 Label Alignment Regression          &           \textbf{42.47}&           \textbf{63.90}&                \textbf{54.69}&                \textbf{51.70}&                \textbf{47.49}\\
\hline
\end{tabular}
\small 
\caption{Accuracies on MNIST-USPS multiclass benchmark.  M and U indicate MNIST and USPS. Ratios (0.3, 0.2, 0.1) indicate MNIST tasks where 9 out of the 10 digits are subsampled. For the subsampling setting (last three columns), each column is averaged over the 10 subsampling classification tasks. In all tasks, the proposed algorithm improves the accuracy and achieves the highest performance. }
\label{table:m-u-multiclass}
\end{table*}

\subsection{Parameter Sensitivity}
The parameter $\lambda$ indicates the ratio of the loss obtained from the unsupervised information of the target domain.  We want to quantitatively evaluate how this ratio influences the performance of our proposed method on the target domain. The sensitivity visualization is shown in Figure \ref{fig:lambda_sensitivity_curve}. 
\begin{figure}
    \centering
    \includegraphics[width=.5\linewidth]{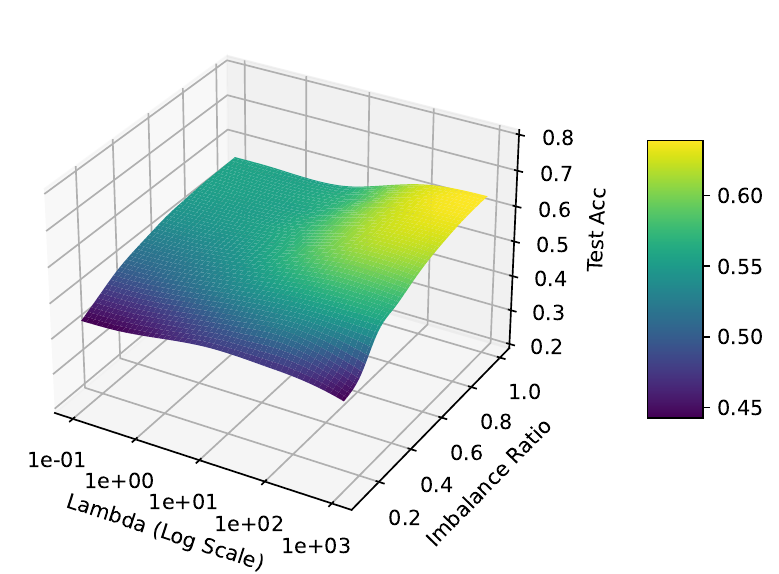}
    \caption{$\lambda$ sensitivity curves of accuracies on MNIST USPS multiclass benchmark.  The performance of the proposed method is relatively invariant over different $\lambda$ under various imbalance (subsampling) ratios. Generally greater $\lambda$ comes with better performance in the target domain because more weight and emphasis of loss is put on the information of the target domain.}
    \label{fig:lambda_sensitivity_curve}
\end{figure}

\subsection{Regression}
We create a regression task similar to the synthetic experiment in the main paper. The aim is to understand if results similar to the synthetic experiment hold in a setting where the labels are not restricted to $\pm1$ and where mean squared error is used for evaluation. All the details are the same as in the classification experiment in the main paper except that the label vector is simply set to the first left singular vector. Figure \ref{fig:synthetic_reg} shows the results and corroborates the findings in the main paper. Note that DANN is not directly applicable here.

\begin{figure}[ht]
\begin{subfigure}{.24\textwidth}
  \centering
  \includegraphics[width=\linewidth]{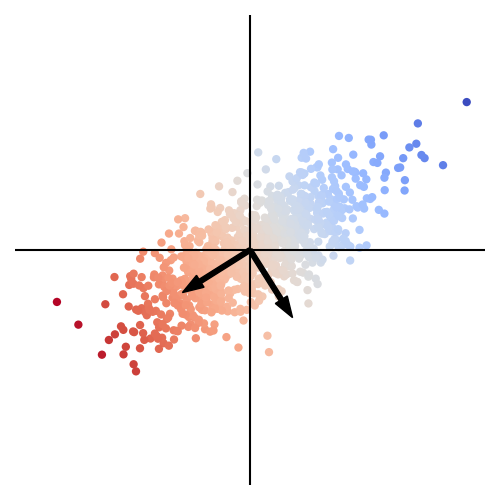}
 \caption{Source domain}
  \label{fig:synthetic_reg_source}
\end{subfigure}
\begin{subfigure}{.24\textwidth}
  \centering
  \includegraphics[width=\linewidth]{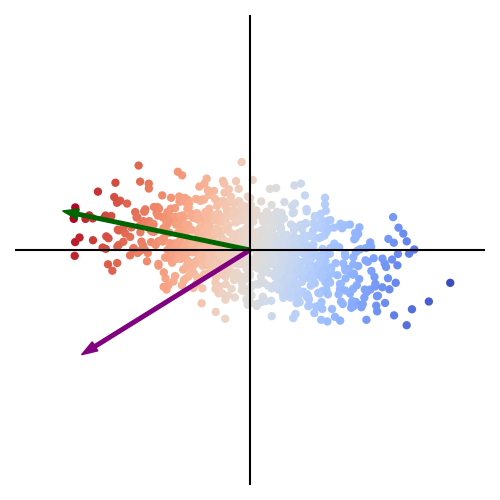}
 \caption{Target Domain}
  \label{fig:synthetic_reg_weights}
\end{subfigure}
\begin{subfigure}{.24\textwidth}
  \centering
  \vspace{.95cm}
  \includegraphics[width=\linewidth]{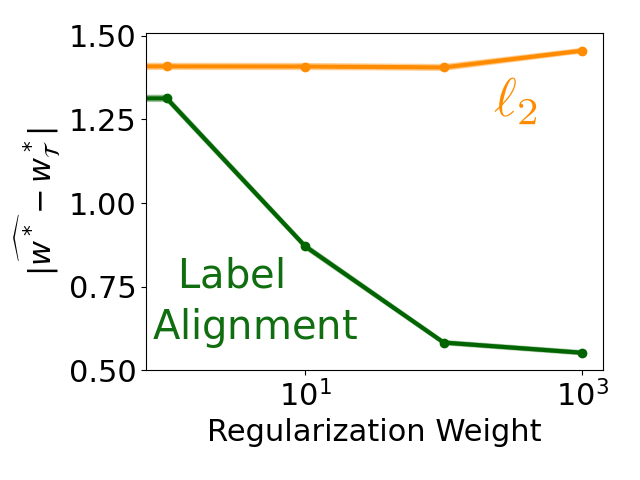}
 \caption{Parameter Error}
  \label{fig:synthetic_reg_wdiff}
\end{subfigure}
\begin{subfigure}{.24\textwidth}
  \centering
  \vspace{.95cm}
  \includegraphics[width=\linewidth]{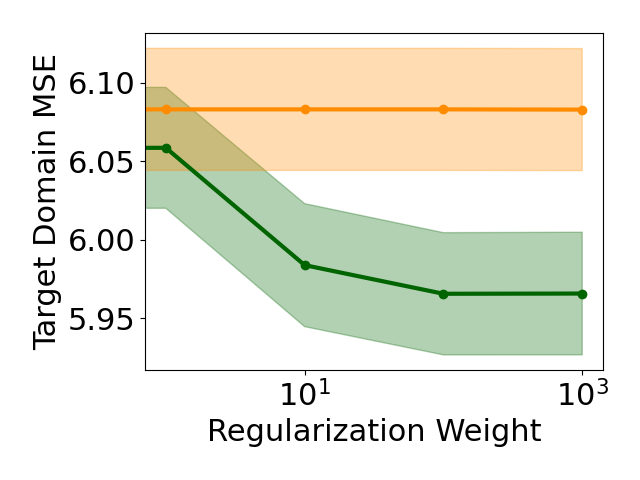}
 \caption{Performance}
  \label{fig:synthetic_reg_mse}
\end{subfigure}

\caption{(a) Source domain. The black arrows show principal components. (b) Target domain. The arrows show weights found without using any regularization (purple) and with our regularizer with $\lambda=10^3$ (green). (c) Distance between the estimated and the optimal weights. The proposed regularizer reduces this distance. (d) Performance on the target domain. The x axis is the regularization coefficient for $\ell_2$ regularization and $\lambda$ for the proposed regularizer. The proposed regularizer achieves lower error on this domain. Shaded areas are standard errors over 10 runs.}
\label{fig:synthetic_reg}
\end{figure}

\vskip 0.2in
\bibliography{sample}

\end{document}